\documentclass{article}

\usepackage[utf8]{inputenc} %
\usepackage[T1]{fontenc}    %

\usepackage[bitstream-charter,cal=cmcal]{mathdesign}
 
\usepackage{amsmath}
\usepackage[scaled=0.92]{PTSans}

\usepackage[
  paper  = letterpaper,
  left   = 1.65in,
  right  = 1.65in,
  top    = 1.0in,
  bottom = 1.0in,
  ]{geometry}

\usepackage[usenames,dvipsnames,table]{xcolor}
\definecolor{shadecolor}{gray}{0.9}

\usepackage[final,expansion=alltext]{microtype}
\usepackage[english]{babel}
\usepackage[parfill]{parskip}
\usepackage{afterpage}
\usepackage{framed}

{\endMakeFramed}

\usepackage{lineno}

\usepackage{ragged2e}

\newcounter{parcount}

\usepackage{fancyvrb}
\fvset{fontsize=\normalsize}

\usepackage[acronym,smallcaps,nowarn]{glossaries}

\usepackage{natbib}
\usepackage[colorlinks,linktoc=all]{hyperref}
\usepackage[all]{hypcap}
\hypersetup{citecolor=blue}
\hypersetup{urlcolor=blue}
\hypersetup{linkcolor=Black}

\usepackage[nameinlink]{cleveref}

\usepackage{natbib}
\usepackage{hyperref}       %
\usepackage{url}            %
\usepackage{booktabs}       %
\usepackage{nicefrac}       %
\usepackage{microtype}      %
\usepackage{amsthm}
\usepackage{shortcuts}
\usepackage{tikz}
\usepackage{thmtools}
\usepackage{thm-restate}
\usepackage{subfig,caption,graphicx}
\usepackage[ruled,vlined]{algorithm2e}
\usepackage{listings}
\usepackage{rotating,enumitem}

\newcount\Comments  %
\Comments=0 %
\definecolor{darkgreen}{rgb}{0,0.5,0}
\definecolor{darkred}{rgb}{0.7,0,0}
\definecolor{teal}{rgb}{0.3,0.8,0.8}
\definecolor{blue}{rgb}{0,0,1}
\definecolor{purple}{rgb}{0.5,0,1}
\newcommand{\kibitz}[2]{\ifnum\Comments=1\textcolor{#1}{#2}\fi}

\newcommand{\ci}{\perp\!\!\!\perp}
\theoremstyle{plain}
\newtheorem{theorem}{Theorem}
\newtheorem{proposition}[theorem]{Proposition}
\newtheorem{lemma}[theorem]{Lemma}
\newtheorem{corollary}[theorem]{Corollary}
\theoremstyle{definition}

\newtheorem{definition}{Definition}

\usetikzlibrary{shapes,decorations,arrows,calc,arrows.meta,fit,positioning}
\tikzset{
    -Latex,auto,node distance =1 cm and 1 cm,semithick,
    state/.style ={ellipse, draw, minimum width = 0.7 cm},
    point/.style = {circle, draw, inner sep=0.04cm,fill,node contents={}},
    square/.style = {rectangle, draw, inner sep=0.04cm,fill,node contents={}},
    bidirected/.style={Latex-Latex,dashed},
    el/.style = {inner sep=2pt, align=left, sloped}
}

\usepackage{authblk}
\title{I-SPEC: An End-to-End Framework for Learning Transportable, Shift-Stable Models}
\author[1]{Adarsh Subbaswamy}
\author[1]{Suchi Saria}
\affil[1]{Department of Computer Science; Johns Hopkins University}
\date{}                     %

\begin{document}

\maketitle

\begin{abstract}
Shifts in environment between development and deployment cause classical supervised learning to produce models that fail to generalize well to new target distributions. Recently, many solutions which find invariant predictive distributions have been developed. Among these, graph-based approaches do not require data from the target environment and can capture more stable information than alternative methods which find stable feature sets. However, these approaches assume that the data generating process is known in the form of a full causal graph, which is generally not the case. In this paper, we propose \textsc{I-Spec}, an end-to-end framework that addresses this shortcoming by using data to learn a partial ancestral graph (PAG). Using the PAG we develop an algorithm that determines an interventional distribution that is stable to the declared shifts; this subsumes existing approaches which find stable feature sets that are less accurate. We apply \textsc{I-Spec} to a mortality prediction problem to show it can learn a model that is robust to shifts without needing upfront knowledge of the full causal DAG.

 \end{abstract}

\section{Introduction}
One of the primary barriers to the deployment of machine learning models in safety-critical applications is unintended behaviors arising at deployment that were not problematic during model development. For example, predictive policing systems have been shown to be vulnerable to predictive feedback loops that cause them to disproportionately overpatrol certain neighborhoods \citep{lum2016predict,ensign2018runaway}, and a patient triage model erroneously learned that asthma lowered the risk of mortality in pneumonia patients \citep{caruana2015intelligible}. At the heart of many such unintended behaviors are \emph{shifts in environment}---changes in the conditions that generated the training data and deployment data \citep{subbaswamy2019development}. An important step for ensuring that models will perform reliably under shifting conditions is for model developers to anticipate failures and train models in a way that addresses likely sources of error.

Consider the study of \citet{zech2018variable}, who trained a model to diagnose pneumonia from chest X-rays using data from one hospital. Notably, the X-rays contained stylistic features, including inlaid tokens that encoded geometric information such as the X-ray orientation. When they applied the model to data from other hospital locations, they found the model's performance significantly deteriorated, indicating that it failed to generalize across the shifts between hospitals. In particular, shifts in the distribution of style features occurred due to differences in equipment at different hospital locations and differences in imaging protocols between hospital departments.

More formally, \citeauthor{zech2018variable} encountered an instance of \emph{dataset shift}, in which shifts in environment resulted in differing train and test distributions. Typical solutions for addressing dataset shift use samples from the test distribution to reweight training samples during learning (see \citet{quionero2009dataset} for an overview). In many practical applications, however, there are unknown or multiple possible test environments (e.g., for a cloud-based machine learning service), making it infeasible to acquire test samples. In contrast with reweighting solutions, \emph{proactive} solutions do not use test samples during learning. For example, one class of proactive solutions is entirely \emph{dataset-driven}: using datasets from multiple training environments, they empirically determine a \emph{stable conditional distribution} that is invariant across the datasets (e.g., \citet{muandet2013domain,rojas2018invariant,arjovsky2019invariant}). A model of this distribution is then used to make predictions in new, unseen environments.

While the dataset-driven methods find a distribution that is invariant across the training datasets, they do not, in general, provide guarantees about the specific shifts in environment to which the resulting models are stable. This information is crucially important in safety-critical domains where incorrect decision making can lead to failures. In the pneumonia example, suppose we had multiple training datasets which contained shifts in style features due to differing equipment, but, critically, did not contain shifts in protocols between departments. When a dataset-driven method finds a predictive distribution that is invariant across the training datasets, its developers will not know that this distribution is stable to shifts in equipment but is not stable to shifts in imaging protocols. When the resulting model is deployed at a hospital with different imaging protocols (e.g., distribution of front-to-back vs back-to-front X-rays), the model will make (potentially arbitrarily) incorrect predictions resulting in unanticipated misdiagnoses and disastrous failures. 

Alternative methods use graphical representations of the data generating process (DGP) (e.g., causal \emph{directed acylic graphs} (DAGs)), letting developers proactively reason about the DGP to specify shifts and provide stability guarantees. One advantage of explicit graph-based methods is that they allow the computation of stable \emph{interventional} \citep{subbaswamy2019preventing} and \emph{counterfactual} distributions \citep{subbaswamy2018counterfactual}; these retain more stable information than conditional distributions, leading to higher accuracy. A primary challenge in applying these approaches, however, is that in large-scale complex domains it is very difficult to fully specify the graph (i.e., edge adjacencies and directions) from prior knowledge alone. We address this by extending graphical methods for finding stable distributions to partial graphs that can be learned directly from data.

\textbf{Contributions:}
A key impediment to the deployment of
machine learning is the lack of methods for training models that can generalize despite shifts across training and test environments. Stable interventional distributions estimated from data yield models that are guaranteed to be invariant to shifts (e.g., the modeler can upfront identify which shifts the model is protected against). However, to estimate such distributions, prior approaches require knowledge of the underlying causal DAG or extensive samples from multiple training environments. We propose \textsc{I-Spec}, a novel end-to-end framework which allows us to estimate stable interventional distributions when we do not have prior causal knowledge of the full graph. To do so, we learn a partial ancestral graph (PAG) from data; the PAG captures uncertainty in the graph structure. Then, we use the PAG to inform the choice of mutable variables, or shifts to protect against. We develop an algorithm that uses the PAG and set of mutable variables to determine a stable interventional distribution. We prove the soundness of the algorithm and prove that it subsumes existing dataset-driven approaches which find stable conditional distributions. Empirically, we apply \textsc{I-Spec} to a large, complicated healthcare problem and show that we are able to learn a PAG, use it to inform the choice of mutable variables, and learn models that generalize well to new environments. We also use simulated data to provide insight into when stable models are desirable by examining how shifts of varying magnitude affect the difference in performance between stable and unstable models.

\section{Background}\label{sec:prelims}
The proposed framework, \textsc{I-Spec}, uses PAGs and interventional distributions, which we briefly overview here.

\textbf{Notation:} Sets of variables are denoted by bold capital letters while their assignments are denoted by bold lowercase letters. The  sets  of  parents, children, ancestors, and descendants in a graph $\mathcal{G}$ will be denoted by $pa(\cdot)$, $ch(\cdot)$, $an(\cdot)$, and $de(\cdot)$, respectively. We will consider prediction problems with observed variables $\mathbf{O}$ and target variable $Y$. 

\textbf{Causal Graphs:} We assume the DGP underlying a prediction problem can be represented as a causal DAG with latent variables, or equivalently, a causal \emph{acylic directed mixed graph} (ADMG) over $\mathbf{O}$ which contains directed ($\rightarrow$) and bidirected ($\leftrightarrow$, representing unobserved confounding) edges. A causal \emph{mechanism} is the functional relationship that generates a child from its (possibly unobserved) parents. 

Multiple ADMGs can contain the same conditional independence and ancestral information about the observed variables $\mathbf{O}$---consider, for example, an ADMG $\mathcal{G}_1$ with edges $X\rightarrow Y, X\leftrightarrow Y$ and an ADMG $\mathcal{G}_2$ with edge $X\rightarrow Y$. However, an ADMG is associated with a unique \emph{maximal ancestral graph} (MAG) \citep{richardson2002ancestral} which represents a set of ADMGs that share this information, with at most one edge between any pair of variables (e.g., the MAG $\mathcal{M}$ associated with $\mathcal{G}_1$ and $\mathcal{G}_2$ is $X\rightarrow Y$). The problem is that multiple MAGs may contain the same independences (e.g., $X\rightarrow Y$ and $X\leftrightarrow Y$ are \emph{Markov equivalent}). Fortunately, a \emph{partial ancestral graph} (PAG) $\mathcal{P}$ represents an equivalence class of MAGs, denoted by $\{\mathcal{P}\}$. $\mathcal{P}$ and every MAG in $\{\mathcal{P}\}$ have the same adjacencies, but they differ in the edge marks. An arrow head (or tail) is present in $\mathcal{P}$ if that head (or tail) is present in all MAGs in $\{\mathcal{P}\}$. Otherwise, the edge mark is $\circ$ and the edge is partially (or non) directed. The PAG for $\mathcal{G}_1$, $\mathcal{G}_2$, and $\mathcal{M}$ is $X \circcirc Y$. PAGs can be learned from data, and are the output of the FCI algorithm \citep{spirtes2000causation}.

Because PAGs are partial graphs, we require a few additional definitions. First, a path from $X$ to $Y$ is \emph{possibly directed} if no arrowhead along the path is directed towards $X$. In such a path, $X\in PossibleAn(Y)$ is a \emph{possible ancestor} of $Y$, and $Y\in PossibleDe(X)$ is a \emph{possible descendant} of $X$. There are two kinds of directed edges in MAGs and PAGs. A directed edge $X\rightarrow Y$ is \emph{visible} if there is a node $Z$ not adjacent to $Y$, such that either there is an edge between $Z$ and $X$ that is into $X$, or there is a collider path between $Z$ and $X$ that is into $X$, and every node on the path is a parent of $Y$ \citep{maathuis2015generalized}. Otherwise, the edge is \emph{invisible}. The importance of visible edges is that if $X\rightarrow Y$ is visible, then it implies that there is no unobserved confounder (i.e., no $X \leftrightarrow Y$ edge in any ADMG in the equivalence class).

\textbf{Interventional Distributions:} We now review interventional distributions, which we use to make stable predictions. First, note that the distribution of observed variables $\mathbf{O}$ in an ADMG factorizes as 
\begin{equation}\label{eq:admg}
    P(\mathbf{O}) = \sum_{\mathbf{U}} \prod_{O_i \in \mathbf{O}} P(O_i|pa(O_i)) P(\mathbf{U}),
\end{equation}
where $\mathbf{U}$ are unobserved variables corresponding to the bidirected edges. An interventional distribution of the form $P(Y|Z, do(X))$ is defined in terms of the $do$ operator \citep{pearl2009causality}.\footnote{We will use $P(Y|Z,do(X))$ and $P_X(Y|Z)$ interchangeably.} Graphically, in ADMGs the intervention $do(X)$ deletes all edges into $X$. Distributionally, $do(\mathbf{X}=\mathbf{x})$ deletes the $P(X|pa(X))$,$\forall X\in\mathbf{X}$ terms in (\ref{eq:admg}) to yield the interventional distribution.
The difficulty of using interventional distributions is that they are not always \emph{identifiable} as a function of the observational data distribution.

\begin{definition}[Causal Identifiability]
For disjoint $\mathbf{X},\mathbf{Y} \subseteq \mathbf{O}$, the effect of an intervention $do(\mathbf{x})$ on $\mathbf{Y}$ conditional on $\mathbf{Z}$ is said to be identifiable from $P$ in $\mathcal{G}$ if $P_{\mathbf{x}}(\mathbf{Y}|\mathbf{Z})$ is (uniquely) computable from $P(\mathbf{O})$ in any causal model which induces $\mathcal{G}$.
\end{definition}

The ID algorithm \citep{shpitser2006identification,shpitser2006idc} takes disjoint sets $\mathbf{X},\mathbf{Y},\mathbf{Z}\subset\mathbf{O}$ and an ADMG $\mathcal{G}$ and returns an expression in terms of $P(\mathbf{O})$ if $P_\mathbf{X}(\mathbf{Y}|\mathbf{Z})$ is identifiable. Recently, the ID algorithm was extended to PAGs \citep{jaber2019causal}, with the CIDP algorithm\footnote{We restate this algorithm in Appendix \ref{app:CIDP}} \citep{jaber2019idc} returning an expression for $P_\mathbf{X}(\mathbf{Y}|\mathbf{Z})$ if the conditional interventional distribution is uniquely computable (i.e., identifiable) from $P(\mathbf{O})$ for all ADMGs in the equivalence class $\{\mathcal{P}\}$ defined by a PAG $\mathcal{P}$. We use CIDP in the proposed method to determine identified interventional expressions.

\section{Methods}
We now present \textsc{I-Spec}, a framework for finding a stable distribution that is invariant to shifts in environments. \textsc{I-Spec} works as follows: Given datasets collected from multiple source environments, a user defines a graphical \emph{invariance specification} by first learning a PAG from the combined datasets (without requiring prior causal knowledge). Then, the user can determine which shifts to protect against by reasoning about the PAG and consulting it regarding shifts that occurred across the datasets. Given the resulting invariance specification (i.e., PAG and shifts to protect against), graphical criteria are used to search for the best-performing stable interventional distribution which is guaranteed to be invariant to the specified shifts. 

The rest of this section is organized as follows:
In Section \ref{subsec:transport} we introduce invariance specifications. Next, in Section \ref{subsec:framework} we describe the steps of \textsc{I-Spec} (Algorithm \ref{alg}) and prove its correctness. Then, in Section \ref{subsec:connections} we establish the superiority of stable interventional distributions over stable conditional distributions, proving that Algorithm \ref{alg} subsumes existing dataset-driven methods. Finally, in Section \ref{subsec:one} we discuss how \textsc{I-Spec} can be adapted to settings in which data from only one environment is available.

\subsection{Graphical Invariance Specifications}\label{subsec:transport}
Our goal is to predict accurately in new environments without using test samples. To do so, we need a way to represent the possible environments and how they can differ. For this reason, we will now introduce invariance specs, which are built around a PAG and specify shifts to protect against. They do not require prior causal knowledge and can be learned from data. Given the invariance spec, we show that certain interventional distributions provide stability guarantees to the specified shifts.

First, we formalize the notion of a \emph{stable distribution}. Stable distributions are the same in all environments, can be learned from the source environment data, and can be applied in the target environment without any adjustment.
\begin{definition}[Stable Distribution]
For environment set $\mathcal{E}$, a distribution $P(Y|\mathbf{Z}),\mathbf{Z}\subseteq\mathbf{O}$ is said to be stable if, for any two environments in $\mathcal{E}$ corresponding to joint distributions $P^{(1)}(\mathbf{O})$ and $P^{(2)}(\mathbf{O})$, $P^{(1)}(Y|\mathbf{Z})=P^{(2)}(Y|\mathbf{Z})$.
\end{definition}

Stable distributions are defined with respect to a set of environments. We develop invariance specifications as a way to represent a set of environments when we do not have prior causal knowledge by generalizing \emph{selection diagrams} \citep{pearl2011transportability,subbaswamy2019preventing}, a representation that assumes a known causal graph.

\begin{definition}[Selection Diagram]
A selection diagram is an ADMG augmented with auxiliary selection variables $\mathbf{S}$ such that for $S\in\mathbf{S},X\in\mathbf{O}$ an edge $S\rightarrow X$ denotes the mechanism that generates X can vary across environments. Selection variables may have at most one child.
\end{definition}

A selection diagram describes a set of environments whose DGPs share the same underlying graph structure (i.e., ADMG). Only the causal mechanisms associated with the children of selection variables may differ across environments, usually expressed as distributional shifts in the $P(V|pa(V)),V\in ch(\mathbf{S})$ terms of the factorization of the joint $P(\mathbf{O})$ via Equation (\ref{eq:admg}).\footnote{This special case of shifts in mechanism are sometimes referred to as ``soft interventions''.}

Selection diagrams assume that both the full graph (i.e., ADMG) and the shifts (i.e., placement of selection variables) are known, prohibiting their use in complex domains. A natural idea to relax this would be to define a selection PAG, thus allowing for uncertainty in the graphical structure. However, a PAG augmented with selection variables would not technically be a PAG---for example, selection variables could not be used to determine visible edges in the PAG. For this reason, we introduce the notion of a graphical \emph{invariance specification} (or simply invariance spec) which generalizes selection diagrams.

\begin{definition}[Invariance spec]
An invariance spec is a 2-tuple $\langle\mathcal{P}, \mathbf{M}\rangle$ consisting of a graphical representation, $\mathcal{P}$, of the DGP and a set of mutable variables, $\mathbf{M}$, whose causal mechanisms are vulnerable to shifts.
\end{definition}

When $\mathcal{P}$ is a PAG, an invariance spec defines a set of environments which share the same underlying graph structure (i.e., ADMG) that is only known up to an equivalence class (the PAG). Now we say the mechanism shifts are associated with the \emph{mutable} variables $\mathbf{M}$ \citep{subbaswamy2019preventing}. Note that if $\mathcal{P}$ is an ADMG, then by augmenting $\mathcal{P}$ with selection variables $\mathbf{S}$ as parents of $\mathbf{M}$ we recover a selection diagram. We will only consider when $\mathcal{P}$ is a PAG, which means the invariance spec can be learned from data.

Like selection diagrams, invariance specs provide graphical criteria for determining if a distribution is stable. The next result states that a distribution is stable in an invariance spec $\langle \mathcal{P}, \mathbf{M} \rangle$ when the distribution is stable in every selection diagram corresponding to the equivalence class $\{\mathcal{P}\}$.\footnote{Proofs of all results are in Appendix \ref{app:proofs}.}

\begin{proposition}\label{prop:admiss}
Given an invariance spec $\langle \mathcal{P}, \mathbf{M} \rangle$, a distribution $P(Y|\mathbf{X})$ is stable if $Y\ci \mathbf{S}| \mathbf{X}$ in every ADMG in the equivalence class $\{\mathcal{P}\}$ augmented with selection variables $\mathbf{S}$ as parents of $\mathbf{M}$.
\end{proposition}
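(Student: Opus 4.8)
The plan is to reduce the PAG-level claim to a statement about a single selection diagram and then dispatch the quantifier over the equivalence class $\{\mathcal{P}\}$. The crucial observation is that, by the semantics of an invariance spec, all environments in the set share one common underlying ADMG structure; only the mechanisms at the mutable variables $\mathbf{M}$ are allowed to shift. Hence any two environments $P^{(1)}(\mathbf{O})$ and $P^{(2)}(\mathbf{O})$ admitted by the spec are generated by a single (though unknown) ADMG $\mathcal{G}^\star \in \{\mathcal{P}\}$. First I would make this precise: augmenting $\mathcal{G}^\star$ with selection variables $\mathbf{S}$ as parents of $\mathbf{M}$ yields a bona fide selection diagram $\mathcal{D}^\star$ (here $ch(\mathbf{S})=\mathbf{M}$ and the $\mathbf{S}$ are exogenous), and each admissible environment corresponds to conditioning the joint of $\mathcal{D}^\star$ on a fixed assignment $\mathbf{S}=\mathbf{s}$. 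Concretely, fixing $\mathbf{S}=\mathbf{s}$ in the factorization (\ref{eq:admg}) selects the $P(V\mid pa(V))$ terms for $V\in \mathbf{M}$, so ranging over values of $\mathbf{s}$ reproduces exactly the shifts in mutable mechanisms that the spec permits.

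Given this correspondence, stability of $P(Y\mid\mathbf{X})$ becomes the statement that the conditional does not depend on the value of $\mathbf{S}$, i.e. $P(Y\mid\mathbf{X},\mathbf{S}=\mathbf{s}_1)=P(Y\mid\mathbf{X},\mathbf{S}=\mathbf{s}_2)$ for all assignments, which is equivalent to $P(Y\mid\mathbf{X},\mathbf{S})=P(Y\mid\mathbf{X})$. The hypothesis supplies $Y\ci\mathbf{S}\mid\mathbf{X}$ in every augmented ADMG, and in particular in $\mathcal{D}^\star$. Invoking the soundness of m-separation for ADMGs (the global Markov property), this graphical separation entails the distributional independence $P(Y\mid\mathbf{X},\mathbf{S})=P(Y\mid\mathbf{X})$, whence $P^{(1)}(Y\mid\mathbf{X})=P^{(2)}(Y\mid\mathbf{X})$. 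Since $P^{(1)}$ and $P^{(2)}$ were an arbitrary pair of environments of the spec, $P(Y\mid\mathbf{X})$ is stable.

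The reason the hypothesis must range over \emph{every} ADMG in $\{\mathcal{P}\}$, rather than a single one, is that the PAG leaves the true structure $\mathcal{G}^\star$ undetermined: requiring $Y\ci\mathbf{S}\mid\mathbf{X}$ in all members of the class guarantees it holds for whichever $\mathcal{G}^\star$ actually generated the data, which is precisely what the single-diagram argument consumes. I expect the main obstacle to be the bookkeeping of the first paragraph, namely carefully justifying that the abstract set of environments named by the invariance spec coincides with the family obtained by conditioning a common selection diagram on assignments of $\mathbf{S}$, so that ``stable across environments'' and ``$\mathbf{S}$-invariant in $\mathcal{D}^\star$'' are genuinely the same condition. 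Once that identification is established, the passage from d-separation to distributional invariance is the standard soundness result, and the quantification over $\{\mathcal{P}\}$ is immediate.
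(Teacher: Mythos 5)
Your proposal is correct and follows essentially the same route as the paper: the paper's proof simply cites $S$-admissibility (Theorem 2 of \citet{pearl2011transportability}) for the step you prove inline---that $Y\ci\mathbf{S}\mid\mathbf{X}$ in the true augmented ADMG yields $P(Y\mid\mathbf{X},\mathbf{S})=P(Y\mid\mathbf{X})$ via the global Markov property---and handles the unknown true structure exactly as you do, by quantifying over the equivalence class $\{\mathcal{P}\}$. Your unpacking of the environment-as-$\mathbf{S}$-assignment correspondence is a faithful reconstruction of what that citation delegates.
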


Armed with this graphical stability criterion, we now extend a prior result which showed that intervening on $\mathbf{M}$ yields a stable distribution in ADMGs \citep[Proposition 1]{subbaswamy2019preventing}. The extension to PAGs is what permits the use of interventional distributions to make stable predictions across the environments represented by an invariance spec, and is thus key to the correctness of \textsc{I-Spec}.

\begin{proposition}\label{prop:do-stability}
For invariance spec  $\langle \mathcal{P}, \mathbf{M} \rangle$, $\mathbf{Z}\subseteq{O}\setminus\mathbf{M}$, $P(Y|do(\mathbf{M}),\mathbf{Z})$ is stable to shifts in the mechanisms of $\mathbf{M}$.
\end{proposition}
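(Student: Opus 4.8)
The plan is to reduce \Cref{prop:do-stability} to the graphical stability criterion of \Cref{prop:admiss} by reinterpreting the interventional distribution $P(Y\mid do(\mathbf{M}),\mathbf{Z})$ as an ordinary conditional distribution in the post-intervention (mutilated) model. The central observation is that the $do(\mathbf{M})$ operation deletes \emph{all} edges into $\mathbf{M}$, while in the augmented selection diagrams of $\{\mathcal{P}\}$ the selection variables $\mathbf{S}$ are attached only as parents of $\mathbf{M}$. Hence the very edges that carry the environment-dependent mechanism shifts are exactly the edges removed by the intervention.

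First I would fix an arbitrary ADMG $\mathcal{G}\in\{\mathcal{P}\}$ augmented with $\mathbf{S}$ as parents of $\mathbf{M}$, and form the mutilated graph $\mathcal{G}_{\overline{\mathbf{M}}}$ obtained by deleting every arrowhead into $\mathbf{M}$. Because each $S\in\mathbf{S}$ has at most one child and that child lies in $\mathbf{M}$, after mutilation every selection variable is an isolated node. Consequently $Y\ci\mathbf{S}\mid\mathbf{Z}$ holds trivially in $\mathcal{G}_{\overline{\mathbf{M}}}$, and this conclusion is uniform over the equivalence class: the argument never uses the orientation of the non-selection edges or the presence of bidirected edges, only the facts that $\mathbf{S}$ points solely into $\mathbf{M}$ and that these are precisely the edges the intervention cuts. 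This uniformity over $\{\mathcal{P}\}$ is exactly what upgrades the known single-ADMG result of \citet{subbaswamy2019preventing} to the PAG setting.

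It then remains to translate this separation into stability of the interventional distribution. I would appeal to the truncated factorization: writing the augmented joint via Equation~(\ref{eq:admg}) with the mechanisms of $\mathbf{M}$ in the form $P(M\mid pa(M),\mathbf{S})$, the intervention $do(\mathbf{M})$ deletes exactly these factors, and since they are the only factors in which $\mathbf{S}$ appears, the resulting expression for $P(Y\mid do(\mathbf{M}),\mathbf{Z})$ is free of $\mathbf{S}$. Equivalently, reading $P(Y\mid do(\mathbf{M}),\mathbf{Z})$ as $P(Y\mid\mathbf{Z})$ in the model associated with $\mathcal{G}_{\overline{\mathbf{M}}}$, the separation $Y\ci\mathbf{S}\mid\mathbf{Z}$ yields $P(Y\mid\mathbf{Z},\mathbf{S})=P(Y\mid\mathbf{Z})$ there, which is the statement that the interventional distribution does not change as $\mathbf{S}$, i.e.\ the environment, varies. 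By the definition of a stable distribution this gives the claim.

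The step I expect to be the main obstacle is making this reinterpretation rigorous: \Cref{prop:admiss} and the definition of stability are stated for genuine conditional distributions, whereas the object here is interventional. The care needed is to justify---uniformly over every member of $\{\mathcal{P}\}$ simultaneously---that $P(Y\mid do(\mathbf{M}),\mathbf{Z})$ really is the conditional distribution $P(Y\mid\mathbf{Z})$ of the mutilated model, so that the d-separation criterion legitimately applies, and to confirm that the total isolation of $\mathbf{S}$ after mutilation leaves no remaining directed or bidirected path to $Y$. The disjointness requirement $\mathbf{Z}\subseteq\mathbf{O}\setminus\mathbf{M}$ enters here to keep the conditional interventional distribution well-defined. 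Once these bookkeeping points are settled the separation is immediate, so the difficulty is conceptual---the interventional-to-conditional translation carried out across a whole equivalence class---rather than computational.
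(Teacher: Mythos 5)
Your proposal is correct and follows essentially the same route as the paper's proof: both fix an arbitrary augmented ADMG in $\{\mathcal{P}\}$, observe that deleting all edges into $\mathbf{M}$ severs every $S\rightarrow M$ edge so that $Y\ci\mathbf{S}\mid\mathbf{Z}$ holds in $\mathcal{G}_{\overline{\mathbf{M}}}$ uniformly over the equivalence class, and then conclude $P(Y\mid do(\mathbf{M}),\mathbf{Z},\mathbf{S})=P(Y\mid do(\mathbf{M}),\mathbf{Z})$. The only cosmetic difference is that the paper invokes a rule of $do$-calculus together with Proposition~\ref{prop:admiss} for this last step, whereas you justify it directly via the truncated factorization; these are equivalent.
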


\subsection{\textsc{I-SPEC} Step by Step}\label{subsec:framework}
\newlength{\textfloatsepsave} \setlength{\textfloatsepsave}{\textfloatsep} \setlength{\textfloatsep}{0pt}
\begin{algorithm}[!t]
 \LinesNumbered
 \SetKwInOut{Input}{input}\SetKwInOut{Output}{output}

 \Input{Datasets $\{\mathbf{D}\}$, Observed vars $\mathbf{O}$, Target $Y\in\mathbf{O}$, Environment Indicator $E \not\in\mathbf{O}$}
 \Output{Best stable interventional distribution found or \texttt{FAIL}.}
 Learn invariance spec structure, PAG $\mathcal{P}$ over $\mathbf{O}\cup \{E\}$\;
 Declare mutable variables $\mathbf{M}\subseteq \mathbf{O}$\;
 
 Let $Stable= [\,]$\;
 \For{$\mathbf{Z}\in\mathscr{P}(\mathbf{O}\setminus\{Y\})$}{
 \If{$\mathbf{X}=\mathbf{M}$, $\mathbf{Y}=\{Y\}$, $\mathbf{Z}=\mathbf{Z}$ satisfy \citet[Thm 30]{zhang2008causal}}{
        $Stable =$ append$(Stable, P_\mathbf{M}(Y|\mathbf{Z})= P(Y|\mathbf{W})$\;
    }
 \ElseIf{CIDP$(\mathbf{M},\{Y\}, \mathbf{Z}\setminus\mathbf{M})\not = $\texttt{FAIL}}{
    $Stable =$ append$(Stable, P_\mathbf{M}(Y|\mathbf{Z}\setminus\mathbf{M}))$
 }
 }
 \If{$Stable = [\,]$}{\KwRet \texttt{FAIL}\;}
 \KwRet Dist. in $Stable$ with lowest validation loss\;
 \caption{\textsc{I-Spec}}
 \label{alg}
\end{algorithm}

We have just defined invariance specs and established that intervening on mutable variables yields a stable distribution. We are now ready to discuss each step of \textsc{I-Spec} (Alg \ref{alg}).

\textbf{1) Learning Invariance Spec Structure $\mathcal{P}$:}
The first step (Line 1) in creating an invariance spec is to learn the graphical representation of the DGP. We will assume \emph{faithfulness}: the independences implied by the graph are the only independences in the data distribution. Consider the DGP represented by the ADMG $\mathcal{G}$ in Fig \ref{fig:subgraph}a, in which the observed variables are $\mathbf{O}=\{X_1, X_2, X_3,Y\}$ and the goal is to predict $Y$. While every environment $E$ (e.g., location) shares this graph (including unseen environments), certain mechanisms corresponding to $ch(E)$ may vary across environments (e.g., $P(X_1|E)$). If we knew this full graph $\mathcal{G}$, then we could use existing graphical methods (e.g., \citet{subbaswamy2019preventing}) to find a stable distribution. However, in practice we usually do not know the full graph.

Instead, we will learn the structure of the DGP from datasets $\mathbf{D}$, containing observations of $\mathbf{O}$ collected in training environments $E$. While this problem has itself been extensively studied (as we discuss in Section \ref{sec:rel-work}; Related Work), we will use a simple extension of FCI \citep{spirtes2000causation}, a constraint-based structure learning algorithm which learns a PAG over the observed variables. FCI uses conditional independence tests to determine adjacencies and create a graph skeleton, and then uses a set of orientation rules to determine where to place edge marks \citep{zhang2008completeness}. We apply FCI by pooling the datasets $\mathbf{D}$, adding the environment indicator $E$ as a variable, and adding the logical constraint that $E$ causally precedes all variables in $\mathbf{O}$ (i.e., there can be no $V \rightarrow E$ edge for $V\in\mathbf{O}$).\footnote{Such prior knowledge can be specified in the Tetrad implementation of FCI \url{http://www.phil.cmu.edu/tetrad/}.}

Suppose we had datasets $\mathbf{D}$ generated from multiple environments according to the DGP in Fig \ref{fig:subgraph}a. Using the pooled FCI variant we described, we would learn the PAG $\mathcal{P}$ in Fig \ref{fig:subgraph}b. The PAG represents an equivalence class of ADMGs (which includes the one in Fig \ref{fig:subgraph}a). The $\circ$ edge marks denote structural uncertainty: there is an ADMG in the equivalence class in which this mark is an arrowhead, and an ADMG in the equivalence class in which this mark is a tail. Despite being a partial graph, the PAG still helps inform decisions about which shifts to protect against as discussed next.

\setlength{\textfloatsep}{\textfloatsepsave}
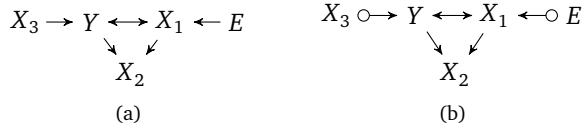
\begin{figure}[!t]
\vspace{-0.1in}
  \centering
  \subfloat[]{
    \begin{tikzpicture}[>=stealth',shorten >=1pt,auto,node distance=2cm,main node/.style={minimum size=0.6cm,font=\sffamily\Large\bfseries},scale=0.7,transform shape]

    \node[main node] (Y) at (0,0) {$Y$};
    \node[main node] (X1) at (1.5, 0) {$X_1$};
    \node[main node] (X2) at (0.75, -1) {$X_2$};
    \node[main node] (X3) at (-1.25, 0) {$X_3$};
    \node[main node] (E) at (2.75, 0) {$E$};

    \draw[->] (E) edge (X1);
    \draw[->] (Y) edge (X2);
    \draw[->] (X1) edge (X2);
    \draw[->] (X3) edge (Y);
    \draw[<->] (Y) edge (X1);
  \end{tikzpicture}
  }
  \qquad
  \subfloat[]{
    \begin{tikzpicture}[>=stealth',shorten >=1pt,auto,node distance=2cm,main node/.style={minimum size=0.6cm,font=\sffamily\Large\bfseries},scale=0.7,transform shape]

    \node[main node] (Y) at (0,0) {$Y$};
    \node[main node] (X1) at (1.5, 0) {$X_1$};
    \node[main node] (X2) at (0.75, -1.125) {$X_2$};
    \node[main node] (X3) at (-1.5, 0) {$X_3$};
    \node[main node] (E) at (3, 0) {$E$};

    \draw[o->] (E) edge (X1);
    \draw[->] (Y) edge (X2);
    \draw[->] (X1) edge (X2);
    \draw[o->] (X3) edge (Y);
    \draw[<->] (Y) edge (X1);
  \end{tikzpicture}
  }
    \caption{(a) An example ADMG. (b) PAG representing invariance spec structure for (a).}
  \label{fig:subgraph}
  \vspace{-0.2in}
\end{figure}

\textbf{2) Declaring Mutable Variables $\mathbf{M}$:}
Given the graph $\mathcal{P}$, to complete the invariance spec we must declare the mutable variables $\mathbf{M}$ (Line 2). The graph suggests possible mechanism shifts that occurred across the datasets: the \emph{possible children} of the environment indicator ($PossCh(E)$; nodes adjacent to $E$ with the edge not into $E$). In Fig \ref{fig:subgraph}b, $ PossCh(E) = \{X_1\}$. When there are many possible children of $E$, an advantage of having an explicit graph is that we can reason about and protect against only the shifts that are most likely to be problematic (vs defaulting to $\mathbf{M}=PossCh(E)$). We demonstrate this process on a mortality prediction problem in our experiments.

\textbf{3) Determining a Stable Distribution:}
Once we have the invariance spec $\langle \mathcal{P}, \mathbf{M} \rangle$, we need to find an identifiable\footnote{Recall that identifiability means that an interventional distribution is a function of the observational training data distribution.} interventional distribution that is stable to mechanism shifts in $\mathbf{M}$. In particular, we want to select the one that produces a model which performs best on heldout validation data. 
We established in Proposition \ref{prop:do-stability} that conditional interventional distributions which intervene on $\mathbf{M}$ are stable (i.e., distributions of the form $P_\mathbf{M}(Y|\mathbf{Z}\setminus\mathbf{M}),\mathbf{Z}\subseteq \mathbf{O}\setminus\{Y\}$). To check identifiability, we use two existing graphical criteria in PAGs (Lines 5,7), but delay further discussion of these until the next section (\ref{subsec:connections}).\footnote{CIDP and \citet[Thm 30]{zhang2008causal} are given in Appendix \ref{app:CIDP}.}

Unfortunately, searching for the optimal conditioning set that yields a stable identifiable interventional distribution (Alg \ref{alg}, Lines 4-8), like feature subset selection, is NP-Hard. Following related methods (e.g., \citet{magliacane2018domain,subbaswamy2019preventing}) we consider an exhaustive search over the feature powerset $\mathscr{P}(\mathbf{O}\setminus \{Y\})$, but note that many strategies for improving scalability exist, including greedy searches \citep{rojas2018invariant} or space pruning (using, e.g., $L_1$ regularization).

\textbf{Correctness:} We can now establish that Algorithm \ref{alg} does, in fact, return distributions which are guaranteed to be stable to the specified shifts.

\begin{corollary}[Soundness]\label{cor:sound}
If Algorithm \ref{alg} returns a distribution $P(Y|\mathbf{Z}, do(\mathbf{M}))$, then this is stable to shifts in $\mathbf{M}$.
\end{corollary}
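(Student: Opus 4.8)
The plan is to show that soundness is an almost immediate consequence of Proposition~\ref{prop:do-stability}, because the algorithm is engineered so that every distribution it can possibly return already has the structural form that proposition certifies as stable. First I would observe that a distribution is returned only on the final return line, and only when $Stable \neq [\,]$ (otherwise the algorithm returns \texttt{FAIL}). Hence any returned object is necessarily one of the elements appended to $Stable$ inside the loop over $\mathbf{Z}\in\mathscr{P}(\mathbf{O}\setminus\{Y\})$, so it suffices to show that \emph{every} element placed in $Stable$ is stable to shifts in $\mathbf{M}$.

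Next I would inspect the two branches by which an element can enter $Stable$. In the branch invoking \citet[Thm 30]{zhang2008causal} the appended object is the conditional interventional distribution $P_\mathbf{M}(Y|\mathbf{Z})$, equated to an observational adjustment expression $P(Y|\mathbf{W})$; in the CIDP branch it is $P_\mathbf{M}(Y|\mathbf{Z}\setminus\mathbf{M})$. In both cases the object is a distribution of the form $P(Y|do(\mathbf{M}),\mathbf{Z}')$ in which the intervention is on all of $\mathbf{M}$, the outcome is $Y$, and the conditioning set $\mathbf{Z}'$ is disjoint from $\mathbf{M}$ --- explicitly $\mathbf{Z}'=\mathbf{Z}\setminus\mathbf{M}$ in the second branch, and forced disjoint from the intervention target by the requirements of the generalized adjustment criterion in the first. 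Thus every element of $Stable$ satisfies $\mathbf{Z}'\subseteq\mathbf{O}\setminus\mathbf{M}$.

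Finally I would apply Proposition~\ref{prop:do-stability} to the invariance spec $\langle\mathcal{P},\mathbf{M}\rangle$ with conditioning set $\mathbf{Z}'$: since $\mathbf{Z}'\subseteq\mathbf{O}\setminus\mathbf{M}$, the proposition yields that $P(Y|do(\mathbf{M}),\mathbf{Z}')$ is stable to shifts in the mechanisms of $\mathbf{M}$. As the returned distribution is one such element, it is stable, which is exactly the claim. I would also remark that the identifiability tests on Lines~5 and~7 play \emph{no} role in the stability argument itself; they only guarantee that the returned interventional distribution is a function of the observational distribution $P(\mathbf{O})$ and hence estimable from the source data, whereas stability is a purely structural consequence of intervening on all mutable variables.

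The main obstacle is not conceptual depth but the bookkeeping of confirming that the two syntactically different branches both produce a conditioning set lying in $\mathbf{O}\setminus\mathbf{M}$, so that Proposition~\ref{prop:do-stability} applies verbatim. The only place this could conceivably fail is the adjustment branch, where $\mathbf{Z}$ ranges over the full powerset and could in principle intersect $\mathbf{M}$; here I would rely on the fact that a (generalized) adjustment set must be disjoint from both the intervention target $\mathbf{M}$ and the outcome $Y$, so that the criterion of \citet[Thm 30]{zhang2008causal} can only be satisfied when $\mathbf{Z}\cap\mathbf{M}=\emptyset$, closing the one gap in the argument.
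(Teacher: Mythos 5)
Your core argument---every element appended to $Stable$ has the form $P(Y|do(\mathbf{M}),\mathbf{Z}')$, and Proposition~\ref{prop:do-stability} certifies all such distributions as stable---is exactly the first half of the paper's proof, so the main route is the same. Where you diverge is in your closing remark that the identifiability tests on Lines~5 and~7 ``play no role in the stability argument itself.'' The paper treats them as part of the proof: it invokes the ``if'' direction of \citet[Thm 30]{zhang2008causal} and the soundness of CIDP \citep[Theorem 1]{jaber2019idc} to guarantee that the observational \emph{expression} the algorithm actually hands back (e.g.\ $P(Y|\mathbf{W})$, or the CIDP functional) really does equal the stable interventional target. Under a strictly literal reading of Corollary~\ref{cor:sound} your version suffices, but the guarantee one actually wants---that the functional of $P(\mathbf{O})$ which gets estimated and deployed is stable---does depend on the identification being \emph{correct}, not merely on the target being identifiable; an unsound identification rule could return an expression that is a function of $P(\mathbf{O})$ yet fails to coincide with $P_{\mathbf{M}}(Y|\mathbf{Z}')$ and hence fails to be invariant. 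You should fold that soundness appeal back in rather than dismissing it.

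One smaller point: your patch for the adjustment branch is factually off. Theorem~30 of \citet{zhang2008causal} (restated as Theorem~\ref{thm:invar} in the appendix) does \emph{not} require $\mathbf{X}\cap\mathbf{Z}=\emptyset$; it only requires $\mathbf{X}\cap\mathbf{Y}=\mathbf{Y}\cap\mathbf{Z}=\emptyset$, and its condition~(1) explicitly governs the case $X\in\mathbf{X}\cap\mathbf{Z}$. So you cannot argue that the criterion ``can only be satisfied when $\mathbf{Z}\cap\mathbf{M}=\emptyset$.'' The concern you raise is legitimate (Proposition~\ref{prop:do-stability} is stated for $\mathbf{Z}\subseteq\mathbf{O}\setminus\mathbf{M}$, and the paper's own proof also glosses over this), but the correct resolution is either to note that $P(Y|do(\mathbf{M}),\mathbf{Z})$ with $\mathbf{Z}\cap\mathbf{M}\neq\emptyset$ reduces to conditioning on $\mathbf{Z}\setminus\mathbf{M}$ under the intervention, or to observe that in that branch stability is delivered directly by the invariance guarantee of Theorem~\ref{thm:invar} itself rather than by Proposition~\ref{prop:do-stability}.
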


\subsection{Connection to Dataset-driven Methods}\label{subsec:connections}

We now show that \textsc{I-Spec} subsumes the ability of existing dataset-driven approaches to find stable (conditional) distributions.\footnote{Namely, \citet{rojas2018invariant,magliacane2018domain} are easily adaptable to the setting of this paper; see Appendix \ref{app:relate}.} This is a consequence of the fact that stable conditional distributions are stable interventional interventional distributions as discussed next.

A prior result provides a sound and complete criterion for cases in which interventional distributions in PAGs reduce to conditional distributions \citet[Theorem 30]{zhang2008causal}. We adapt the criterion (Line 5) to find stable conditional distributions: cases in which $P_\mathbf{M}(Y|\mathbf{Z}) = P(Y|\mathbf{W}), \mathbf{W}\subseteq\mathbf{O}$. Distributions satisfying Line 5 are exactly the stable distributions that can found by existing data-driven methods.

However, not all identifiable interventional distributions reduce to conditionals, and are instead functionals of the observational distribution. These can be found using the CIDP algorithm \citep{jaber2019idc}.\footnote{CIDP has not been proven complete.} For example, in Fig \ref{fig:subgraph}b, if we consider the spec $\langle \mathcal{P}, \mathbf{M}=\{X_1\} \rangle$, then $P_{X_1}(Y|X_3, X_2)\propto P(Y|X_3)P(X_2|Y, X_1)$ via CIDP, while the only stable conditional distribution that can be found via Line 5 is $P_{X_1}(Y|X_3)=P(Y|X_3)$. We can now prove the main result of this section:

\begin{lemma}\label{lemma:cond}
Suppose a dataset-driven method finds $P(Y|\mathbf{Z}),\mathbf{Z}\subseteq\mathbf{O}$ to be stable given the input to Algorithm \ref{alg}. Then Algorithm \ref{alg} finds this distribution to be stable as well.
\end{lemma}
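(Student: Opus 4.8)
The plan is to reduce the claim to a single equality between distributions and then let the completeness of \citet[Thm 30]{zhang2008causal} do the work. Concretely, I would show that whenever a dataset-driven method certifies a conditional $P(Y\mid\mathbf{Z})$ as stable, that same conditional coincides with the interventional distribution $P_\mathbf{M}(Y\mid\mathbf{Z}\setminus\mathbf{M})$ that Algorithm \ref{alg} examines, and that this interventional distribution reduces to an observational conditional — so that the test in Line 5 necessarily fires. The bridge between ``stable conditional'' and the graphical objects the algorithm manipulates is Proposition \ref{prop:admiss}: stability of $P(Y\mid\mathbf{Z})$ is witnessed by the separation $Y\ci\mathbf{S}\mid\mathbf{Z}$ in every ADMG of $\{\mathcal{P}\}$ augmented with selection parents $\mathbf{S}$ of $\mathbf{M}$.

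I would proceed in three steps. First, translate the hypothesis: under faithfulness and the invariance-spec generative model, a dataset-driven method flags $P(Y\mid\mathbf{Z})$ as stable exactly because the conditional is invariant across the pooled environments, and by (the faithful converse of) Proposition \ref{prop:admiss} this is equivalent to $Y\ci\mathbf{S}\mid\mathbf{Z}$ holding class-wide. Second, derive the key identity $P(Y\mid\mathbf{Z}) = P_\mathbf{M}(Y\mid\mathbf{Z}\setminus\mathbf{M})$: since every $S\in\mathbf{S}$ feeds only into a mutable variable, the separation $Y\ci\mathbf{S}\mid\mathbf{Z}$ says all shift-carrying paths out of $\mathbf{M}$ into $Y$ are blocked by $\mathbf{Z}$, so conditioning on $\mathbf{Z}$ already absorbs the effect of the intervention $do(\mathbf{M})$ — a hard intervention being a special case of the mechanism shifts to which the conditional is assumed invariant. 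Third, observe that this identity exhibits $P_\mathbf{M}(Y\mid\mathbf{Z}\setminus\mathbf{M})$ as reducing to an observational conditional; because \citet[Thm 30]{zhang2008causal} is complete for precisely such reductions, the check in Line 5 succeeds at the loop iteration that reaches $\mathbf{Z}$, and Algorithm \ref{alg} appends a distribution equal to $P(Y\mid\mathbf{Z})$. Since the loop exhausts $\mathscr{P}(\mathbf{O}\setminus\{Y\})$ this iteration is always reached, and Corollary \ref{cor:sound} (together with Proposition \ref{prop:do-stability}) guarantees the appended object is genuinely stable, completing the subsumption.

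I expect the second step to be the crux. The delicate points are that the collapse $P(Y\mid\mathbf{Z}) = P_\mathbf{M}(Y\mid\mathbf{Z}\setminus\mathbf{M})$ must be argued at the level of the entire equivalence class rather than one ADMG — so the blocking argument has to survive every admissible orientation of the $\circ$ marks of $\mathcal{P}$ — and that $\mathbf{Z}$ may overlap $\mathbf{M}$, since a mutable variable can itself serve as a stabilizing conditioning feature. I would therefore split $\mathbf{Z}$ into its intervened-upon and conditioned-upon mutable parts and check the separation argument in each case, and I would verify that the resulting blocked-path condition matches the exact hypotheses of \citet[Thm 30]{zhang2008causal} rather than merely yielding identifiability through CIDP; this last alignment is what pins the stable conditional to Line 5 specifically and hence proves that \textsc{I-Spec} subsumes the dataset-driven methods.
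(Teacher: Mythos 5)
Your proposal is correct and rests on the same essential mechanism as the paper's proof: because the exhaustive loop reaches the conditioning set $\mathbf{Z}$ and the criterion of \citet[Thm 30]{zhang2008causal} checked in Line 5 is complete (as well as sound) in PAGs, every stable conditional is necessarily detected and appended to $Stable$. Your additional steps---passing through $Y\ci\mathbf{S}\mid\mathbf{Z}$ via the faithful converse of Proposition \ref{prop:admiss} and establishing the identity $P(Y\mid\mathbf{Z})=P_{\mathbf{M}}(Y\mid\mathbf{Z})$ before invoking completeness---merely make explicit the equivalence between ``stable to mechanism shifts in $\mathbf{M}$'' and ``invariant under interventions on $\mathbf{M}$'' that the paper's two-line proof takes for granted.
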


\begin{lemma}\label{lemma:do-see}
Algorithm \ref{alg} finds stable distributions that cannot be expressed as conditional observational distributions.
\end{lemma}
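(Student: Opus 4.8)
The statement is an existence claim, so the plan is to prove it by exhibiting a single witness: an invariance spec together with one stable interventional distribution that Algorithm~\ref{alg} returns but that is provably not equal to any observational conditional $P(Y\mid\mathbf{W})$. I would reuse the running example already in the text, namely the PAG $\mathcal{P}$ of Fig~\ref{fig:subgraph}b with $\mathbf{M}=\{X_1\}$, and focus on the conditioning set $\mathbf{Z}\setminus\mathbf{M}=\{X_2,X_3\}$, for which CIDP yields $P_{X_1}(Y\mid X_2,X_3)\propto P(Y\mid X_3)\,P(X_2\mid Y,X_1)$.

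First I would verify that this distribution is in fact produced by the algorithm and is stable. Stability is immediate from Proposition~\ref{prop:do-stability}, since $P_{X_1}(Y\mid X_2,X_3)$ intervenes on the mutable set $\mathbf{M}=\{X_1\}$ and $\{X_2,X_3\}\subseteq\mathbf{O}\setminus\mathbf{M}$. For the algorithm to place it in $Stable$, I would check that when the loop reaches the subset $\mathbf{Z}$ with $\mathbf{Z}\setminus\mathbf{M}=\{X_2,X_3\}$, the test on Line~5 (\citet[Thm~30]{zhang2008causal}) fails while the CIDP call on Line~7 succeeds and returns the expression above; the latter is exactly the identifiability computation already reported in the text.

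The crux is to show that $P_{X_1}(Y\mid X_2,X_3)$ cannot be written as $P(Y\mid\mathbf{W})$ for any $\mathbf{W}\subseteq\mathbf{O}$. My plan has two steps. First, a dependence argument pins down the only possible candidate: since the CIDP expression varies nontrivially with each of $X_1$, $X_2$, and $X_3$ under any faithful parametrization of a model in $\{\mathcal{P}\}$ (e.g.\ the ADMG of Fig~\ref{fig:subgraph}a), whereas $P(Y\mid\mathbf{W})$ can vary only with the variables in $\mathbf{W}$, any matching $\mathbf{W}$ must contain $\{X_1,X_2,X_3\}$, i.e.\ $\mathbf{W}=\{X_1,X_2,X_3\}$. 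Second, I would rule this out by showing $P_{X_1}(Y\mid X_2,X_3)\neq P(Y\mid X_1,X_2,X_3)$: in the ADMG of Fig~\ref{fig:subgraph}a the bidirected edge $Y\leftrightarrow X_1$ together with the collider $X_1\rightarrow X_2\leftarrow Y$ means that conditioning on $X_2$ leaves a confounding contribution in the observational conditional which the $do(X_1)$ truncation removes, so the two differ for a generic choice of mechanisms. Exhibiting one faithful parametrization whose numbers disagree completes the separation.

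The main obstacle is this last inequality: I must ensure the interventional and the full observational conditional are not forced to coincide by some coincidental cancellation. The cleanest way to discharge this is to lean on the \emph{completeness} of \citet[Thm~30]{zhang2008causal}, which characterizes exactly when an interventional distribution in a PAG reduces to a conditional; since the configuration $(\mathbf{X}=\{X_1\},\mathbf{Y}=\{Y\},\mathbf{Z}=\{X_2,X_3\})$ fails that criterion (the very failure detected on Line~5), completeness guarantees that no conditional reduction exists, which is precisely the non-expressibility we need. If a self-contained argument is preferred instead, I would give the explicit parametric counterexample for the inequality above, a routine but slightly tedious calculation. Either route establishes that the stable distribution $P_{X_1}(Y\mid X_2,X_3)$ found by Algorithm~\ref{alg} is not an observational conditional, proving the lemma.
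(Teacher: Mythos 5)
Your proposal is correct and takes essentially the same approach as the paper: the paper's proof of this lemma is exactly the witness you chose, namely $P_{X_1}(Y|X_3,X_2)\propto P(Y|X_3)P(X_2|Y,X_1)$ in the invariance spec of Fig \ref{fig:subgraph}b, asserted to be stable but not reducible to any $P(Y|\mathbf{W})$. The only difference is that the paper stops at the assertion, whereas you actually supply the non-reducibility argument (pinning down $\mathbf{W}=\{X_1,X_2,X_3\}$ by a dependence argument and then separating via the $Y\leftrightarrow X_1$ confounding, or via completeness of \citet[Thm 30]{zhang2008causal} applied to the surviving candidate), which is a strictly more complete justification of the same witness.
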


The following is now immediate:
\begin{corollary}\label{cor:subsume}
Algorithm \ref{alg} subsumes methods that find stable conditional (observational) distributions.
\end{corollary}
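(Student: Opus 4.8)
The plan is to read ``subsumes'' as a two-part claim and then assemble it from the two preceding lemmas, since the corollary is stated to be immediate. Part (i) is \emph{containment}: every stable distribution a dataset-driven method can certify is also certified by Algorithm \ref{alg}. Part (ii) is \emph{strictness}: Algorithm \ref{alg} certifies stable distributions that no such method can, so the containment is proper. Establishing both shows that Algorithm \ref{alg}'s output set strictly contains the dataset-driven one, which is exactly what subsumption asserts.

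First I would pin down the semantics of the comparison so the two frameworks are measured against the same object. On a fixed input $\langle\mathcal{P},\mathbf{M}\rangle$, a dataset-driven method by construction searches only among stable conditional observational distributions $P(Y|\mathbf{Z})$ with $\mathbf{Z}\subseteq\mathbf{O}$, so its output set is a collection of conditionals. Algorithm \ref{alg}, by contrast, ranges over stable conditional interventional distributions $P_\mathbf{M}(Y|\mathbf{Z}\setminus\mathbf{M})$ and certifies each via the two graphical criteria on Lines 5 and 7, appending the successful ones to the $Stable$ list. The goal is to compare these two output sets.

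For containment I would invoke Lemma \ref{lemma:cond} verbatim: any $P(Y|\mathbf{Z})$ that a dataset-driven method declares stable is also declared stable by Algorithm \ref{alg} (concretely, the triple $(\mathbf{M},\{Y\},\mathbf{Z})$ satisfies the Line 5 criterion of \citet[Thm 30]{zhang2008causal}, whence $P_\mathbf{M}(Y|\mathbf{Z})$ reduces to a conditional and enters $Stable$). This places the dataset-driven output set inside Algorithm \ref{alg}'s. For strictness I would invoke Lemma \ref{lemma:do-see}: Algorithm \ref{alg} returns stable distributions with no representation as a conditional observational distribution, and such distributions are by definition outside the reach of any method confined to conditionals. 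The worked example $P_{X_1}(Y|X_3,X_2)\propto P(Y|X_3)P(X_2|Y,X_1)$ serves as the witness. Combining the two directions yields proper containment, i.e.\ subsumption.

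Since the corollary is a direct composition of the two lemmas, there is no genuine obstacle at this level; the entire burden already rests on the lemmas. The only point needing care is aligning the notions of ``found'': one must confirm that the conditional $P(Y|\mathbf{W})$ produced when Line 5 succeeds is precisely the object a dataset-driven method would have returned, so that the containment is an equality of \emph{the same} distributions rather than a coincidence of labels. But this matching is exactly the content of Lemma \ref{lemma:cond}, so it need not be re-derived, and the corollary follows immediately.
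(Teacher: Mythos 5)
Your proposal matches the paper's own argument: the paper declares the corollary ``immediate'' from Lemma \ref{lemma:cond} (containment of the dataset-driven output set via the Line 5 criterion) and Lemma \ref{lemma:do-see} (strictness via the $P_{X_1}(Y|X_3,X_2)$ witness), which is precisely the two-part decomposition you give. Your extra care about aligning the notion of ``found'' is a reasonable clarification but, as you note, is already carried by Lemma \ref{lemma:cond}, so nothing is missing.
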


\subsection{Special Case: Only One Source Dataset}\label{subsec:one}
\textsc{I-Spec} was constructed to take datasets from multiple environments as input to match the input of existing dataset-driven methods that, by default, require this. We briefly want to note that \textsc{I-Spec} is easily extensible to the case in which only data from a single environment is available. In this case, there is no environment indicator and one can simply learn a PAG $\mathcal{P}$ over $\mathbf{O}$. Now specification of the mutable variables must come from prior knowledge alone, but we note that this is how selection variables are typically placed \citep{pearl2011transportability}. This yields an invariance spec $\langle \mathcal{P}, \mathbf{M} \rangle$ and stable interventional distributions can be found as before (i.e., Lines 4-8 of Alg \ref{alg}). While it may be possible to modify other methods to only require one dataset, we believe the extension to this setting is most natural using \textsc{I-Spec} because it uses an explicit graph.

\section{Related Work}\label{sec:rel-work}
\textbf{Proactively Addressing Dataset Shift:}
The problem of differing train and test distributions is known as \emph{dataset shift} \citep{quionero2009dataset}. Typical solutions assume access to unlabeled samples from the test distribution which are used to reweight training data during learning (e.g., \citet{shimodaira2000improving,gretton2009covariate}). However, in many practical applications it is infeasible to acquire test distribution samples. In this paper we consider shifts of arbitrary strengths when test samples are not available during learning, though there has been other work on \emph{bounded magnitude distributional robustness} when shifts are of a known type and strength \citep{rothenhausler2018anchor,heinze2017conditional}.

Dataset-driven approaches use datasets from multiple training environments to determine a feature subset \citep{rojas2018invariant,magliacane2018domain,kuang2018stable} or feature representation (e.g., \citet{muandet2013domain,arjovsky2019invariant}) that yields a conditional distribution that is invariant across the training datasets. Perhaps most related is \citet{magliacane2018domain}, whose method uses unlabeled target environment data, though it can be easily adapted to the setting of this paper. Notably, they allow for multiple environment (or ``context'') variables, and additionally consider shifts in environment due to a variety of types of interventions. Dataset-driven methods do not require an explicit causal graph, and by default conservatively protect against all shifts they detect across datasets.

In contrast, some works assume explicit knowledge of the underlying graph (i.e., an ADMG) so that users can specify the shifts in mechanisms to protect against. \citet{subbaswamy2019preventing} determine stable \emph{interventional} distributions in \emph{selection diagrams} \citep{pearl2011transportability} that can be used for prediction. Under the assumption of linear mechanisms, \citet{subbaswamy2018counterfactual} find a stable feature set that includes \emph{counterfactual} features. When there are no unobserved confounders, \citet{schulam2017reliable} protect against shifts in action policies and consider continuous-time longitudinal settings. \textsc{I-Spec} allows for unobserved variables and inherits the benefits of using interventional distributions, but relaxes the need for a fully specified graph, instead using a partial graph learned from data.

\textbf{Causal Discovery Across Multiple Environments:}
One line of research has focused exclusively on the problem of learning causal graphs using data from multiple environments. These methods could help extend \textsc{I-Spec} to other settings: For example, methods have been developed to learn a causal graph using data collected from multiple experimental contexts \citep{mooij2016joint,he2016causal} or non-stationary environments \citep{zhang2017causal}. The FCI variant described in Section \ref{subsec:framework} might be viewed as a special case of FCI-JCI \citep{mooij2016joint}, which allows for multiple environment/context variables.
\citet{triantafillou2010learning} consider the problem of learning a joint graph using datasets with different, but overlapping, variable sets. Others have considered local problems, e.g., using invariant prediction to infer a variable's causal parents \citep{peters2016causal,heinze2018invariant} or Markov blanket \citep{yu2019learning} when there are no unobserved confounders.

\section{Experiments}

We perform two experiments to demonstrate the efficacy of \textsc{I-Spec}. In our first experiment, we show that we can apply the framework to large, complicated datasets from the healthcare domain. Specifically, we are able to learn a partial graph that provides meaningful insights into both how the variables are related and what shifts occurred across datasets. We show how these insights can inform the choice of invariance spec, further showcasing the flexibility of the procedure since we can consider different choices of the mutable variables. We empirically show that \textsc{I-Spec} finds distributions that generalize well to new environments and produce consistent predictions irrespective of the choice of training environment. In our second experiment, we measure the degree to which the magnitude of shifts in environments affects the difference in performance between stable and unstable models. We used simulated data to create a large number of datasets in order to compare performance under varying shifts. These results confirm that stable models have more consistent performance across shifted environments and that interventionals can capture more stable information than conditionals.

\subsection{Real Data: Mortality Prediction}
\textbf{Motivation and Dataset:}
Machine learning has been used to predict intensive care unit (ICU) mortality to perform patient triage and identify most at-risk patients (e.g., \citet{pirracchio2015mortality}). However, in addition to physiologic features, studies have shown that features related to \emph{clinical practice patterns} (e.g., ordering frequency of lab tests) are highly predictive of patient outcomes \citep{agniel2018biases}. Since these patterns vary greatly by hospital, accurate models trained at one hospital will have highly variant performance at others, which can lead to unreliable and potentially dangerous decisions when deployed \citep{schulam2017reliable}. Therefore, we apply the proposed method to learn an ICU mortality prediction model that is stable to shifts in the mechanisms of such practice-based features and will generalize well to new hospitals. We demonstrate this using data from ICU patients at a large hospital and test its ability to generalize to smaller hospitals. 

We extract the first 24 hours of ICU patient data from three hospitals in our institution's network over a two year period.\footnote{Full inclusion criteria and details in Appendix \ref{app:exp}.} The pooled dataset consists of 24,787 individuals: 16,608 from Hospital 1 (H1); 5,621 from Hospital 2 (H2); and 2,558 from Hospital 3 (H3). We also extract 17 features, including the worst value of 12 physiological variables (e.g., heart rate), age, type of admission (i.e., surgical or medical), and three underlying chronic diseases (e.g., metastatic cancer), which are the features used in the SAPS II score \citep{le1993new}. To explicitly create a problematic shift, we simulate one practice-based variable: time of day when lab measurements occur (i.e., morning or night), whose correlation with mortality varies by hospital: mortality is correlated with morning measurements at H1, uncorrelated with measurement timing at H2, and correlated with night measurements at H3.

\textbf{Determining the Invariance Spec:}
To determine the invariance spec $\langle \mathcal{P}, \mathbf{M} \rangle$, we first learned a PAG $\mathcal{P}$ from the full pooled dataset, using the hospital ID as the environment indicator $E$. Specifically, using Tetrad we applied the FCI variant described in Section \ref{subsec:framework} and used the the Degenerate Gaussian likelihood ratio test \citep{andrews2019learning}. The learned PAG is given in Appendix \ref{app:pag}; we describe some aspects of it here to demonstrate its value.

12 variables are possible children of $E$, including physiologic variables such as `Age' and `Bicarbonate', and features associated with clinical practice such as `Admit Type' and Lab Time'. Of the 10 variables adjacent to `Mortality', `Age' is the only parent---the other 9 variables are connected via bidirected edges. The explicit graph makes it easy to reason about the DGP: it tells us that `Age' is a causal factor for mortality (e.g., older patients are more likely to die), while `Bicarbonate' is related to mortality through unobserved common causes (such as an acute underlying kidney condition). The bidirected edge connecting `Lab Time' to `Mortality' indicates a practice-based non-causal relationship, and the bidirected edge between `Admit Type' and `Mortality' is due to the latent condition that caused the admission and will contribute to risk of mortality.

In this example, if a model is not stable to shifts in practice pattern-based features, then the predictions it makes will be arbitrarily sensitive to changes in policies between datasets, such as shifts in the times when lab measurements are taken. This sensitivity would render the model \emph{unreliable}, so we reason that shifts in administrative policies should not affect our mortality risk predictions. In contrast, shifts in physiologic mechanisms may encode clinically relevant changes: if there are differences in the treatments patients receive at different hospitals, this would affect the bicarbonate mechanism, for example. Because these shifts would be clinically meaningful, they should affect the decisions we make and model predictions should not be invariant to them. Thus, one reasonable invariance spec is to take the mutable variables to be $\mathbf{M}=\{\textnormal{`Admit Type', `Lab Time'}\}$. Note the flexibility of this procedure: we are able to consider alternative invariance specs (i.e., different choices of $\mathbf{M}$) and compare the sensitivity of resulting solutions.

\textbf{Baselines/Models:}
We consider three models that correspond to the three ways a model developer can respond to shifts in environment: ignoring shifts, protecting against all shifts in the datasets, or protecting against some shifts. Our first baseline, an \textbf{unstable} model, ignores shifts and uses all features. Our second baseline conservatively protects against all shifts in the data by using \textsc{I-Spec} with the invariance spec $\langle \mathcal{P}, PossCh(E) \rangle$, emulating the \textbf{conservative} default data-driven behavior. Finally, to protect against only some shifts we use \textbf{I-SPEC} and the invariance spec $\langle \mathcal{P}, \mathbf{M} \rangle$ defined before.  Using the procedure in Alg \ref{alg}, \textsc{I-Spec} uses 13 of the 18 features, while the conservative method uses 7---neither use `Lab Time' or `Admit Type'. For demonstration we train logistic regression models, though we emphasize that more complex models could be used instead.

\begin{figure}[!t]
\centering
\includegraphics[width=0.75\textwidth]{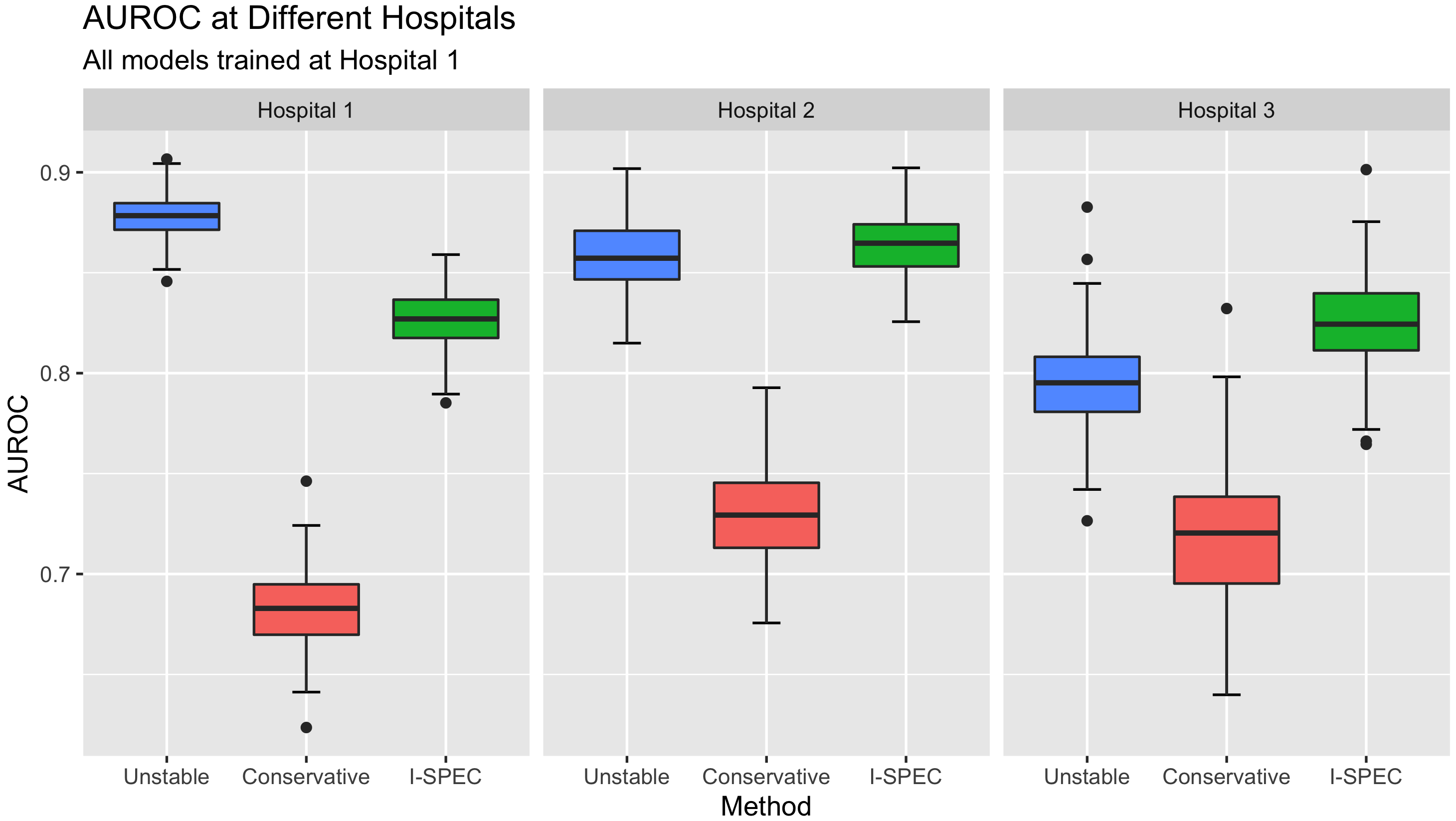}
\caption{Performance (AUROC) of unstable (blue; medians 0.88, 0.86, 0.80), stable conservative (red; medians 0.69, 0.73, 0.72), and stable \textsc{I-Spec} (green; medians 0.83, 0.87, 0.82) mortality prediction models trained at Hospital 1 but evaluated at each hospital.}
\label{fig:auroc}
\end{figure}

\textbf{Experimental Setup:}
We evaluate as follows: We randomly performed 80/20 train/test splits on data from each hospital (and repeated this 100 times). To measure predictive performance, we use the H1 dataset to train the unstable, conservative, and \textsc{I-Spec} models, and evaluated their area under the ROC curve (AUROC) on the test patients from each hospital. This allows us to see the robustness of a model's performance as it is applied to new environments. Beyond performance, we also evaluated the effect of shifts on model decisions. For each approach, we consider pairs of models (one trained at H1, and one trained at H2 or H3) and made predictions on the test set patients. We then computed the rank correlation of the predictions via Spearman's $\rho$. A value of $\rho=1$ indicates that two models produce the same ordering of patients by predicted risk despite being trained at different hospitals (i.e., patient orderings are stable).

\textbf{Results:}
Fig \ref{fig:auroc} shows boxplots of the AUROC of the models at each test hospital.
As expected, the unstable model fails to generalize to new hospitals, with a significant drop in performance from H1 to H3 because the unstable lab time-mortality association flipped. On the other hand, the \textsc{I-Spec} model generalizes well, and outperforms the unstable model at the new hospitals H2 and H3. Comparing \textsc{I-Spec} to the conservative model, we see that the conservative model performs worse at all hospitals precisely because it protects against all shifts (leaving less predictive signal to learn), though its performance also does not deteriorate at new hospitals because the model is also stable.

\begin{figure}[!th]
\centering
\includegraphics[width=0.75\textwidth]{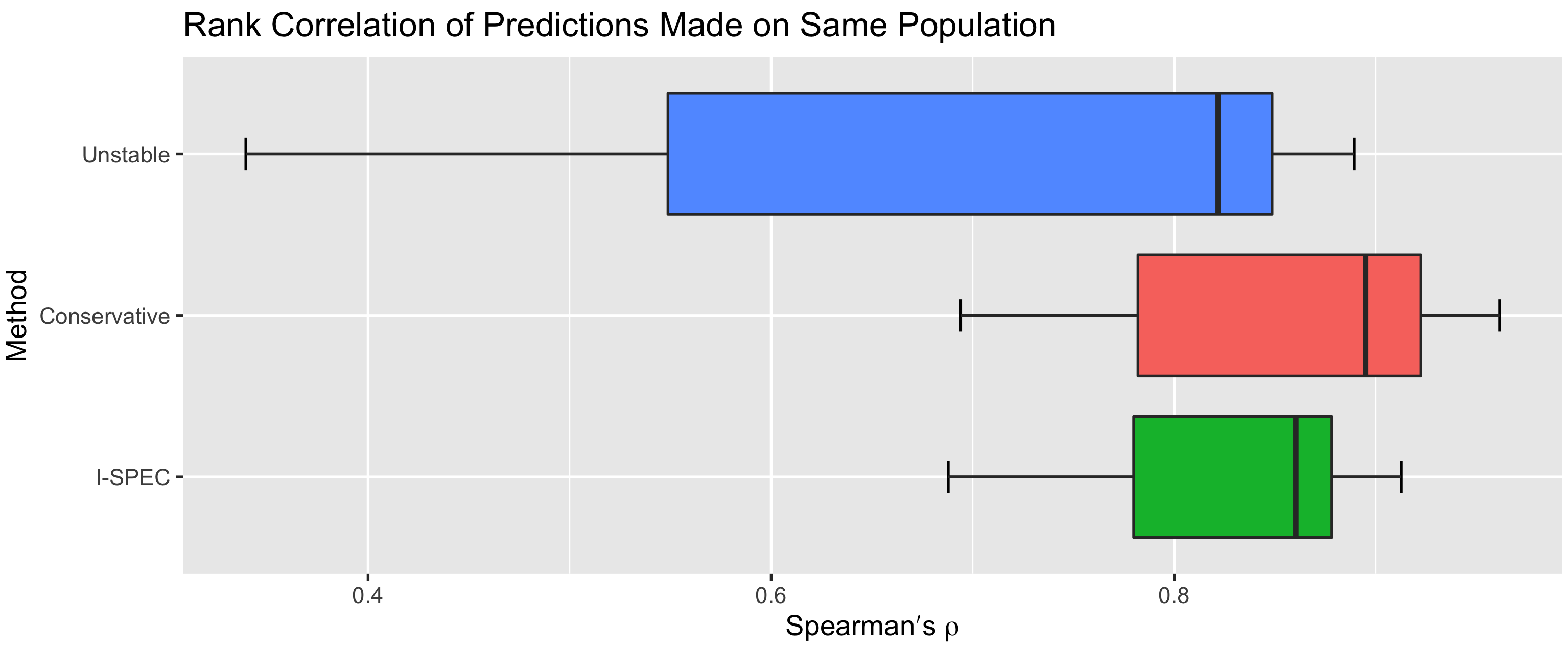}
\caption{Rank correlation between predictions by models trained at different hospitals but applied to the same test patients. Median $\rho$'s: unstable (0.82), conservative (0.90), \textsc{I-Spec} (0.86).}
\label{fig:rank}
\end{figure}

Fig \ref{fig:rank} shows the boxplots of rank correlations of each model's predictions. The unstable model has significantly less stable patient orderings than the two stable models: its rank correlations are highly varying and reach as low as $\rho = 0.34$. Both the \textsc{I-Spec} and conservative models have similar rank correlations, though the conservative model's $\rho$'s tend to be slightly higher due to protecting against all shifts. Overall, we see that stable models produce significantly more consistent predictions (and, thus, more stable patient orderings) than the unstable model. The difference between the stable models, however, is that the \textsc{I-Spec} model has significantly and strictly better discriminative performance at all hospitals. This demonstrates that careful choice of the mutable variables (as opposed  to defaulting to $\mathbf{M}=PossCh(E)$), can yield stable and accurate models.

\subsection{Simulated Data}

\begin{figure}[!th]
\centering
\includegraphics[width=0.65\textwidth]{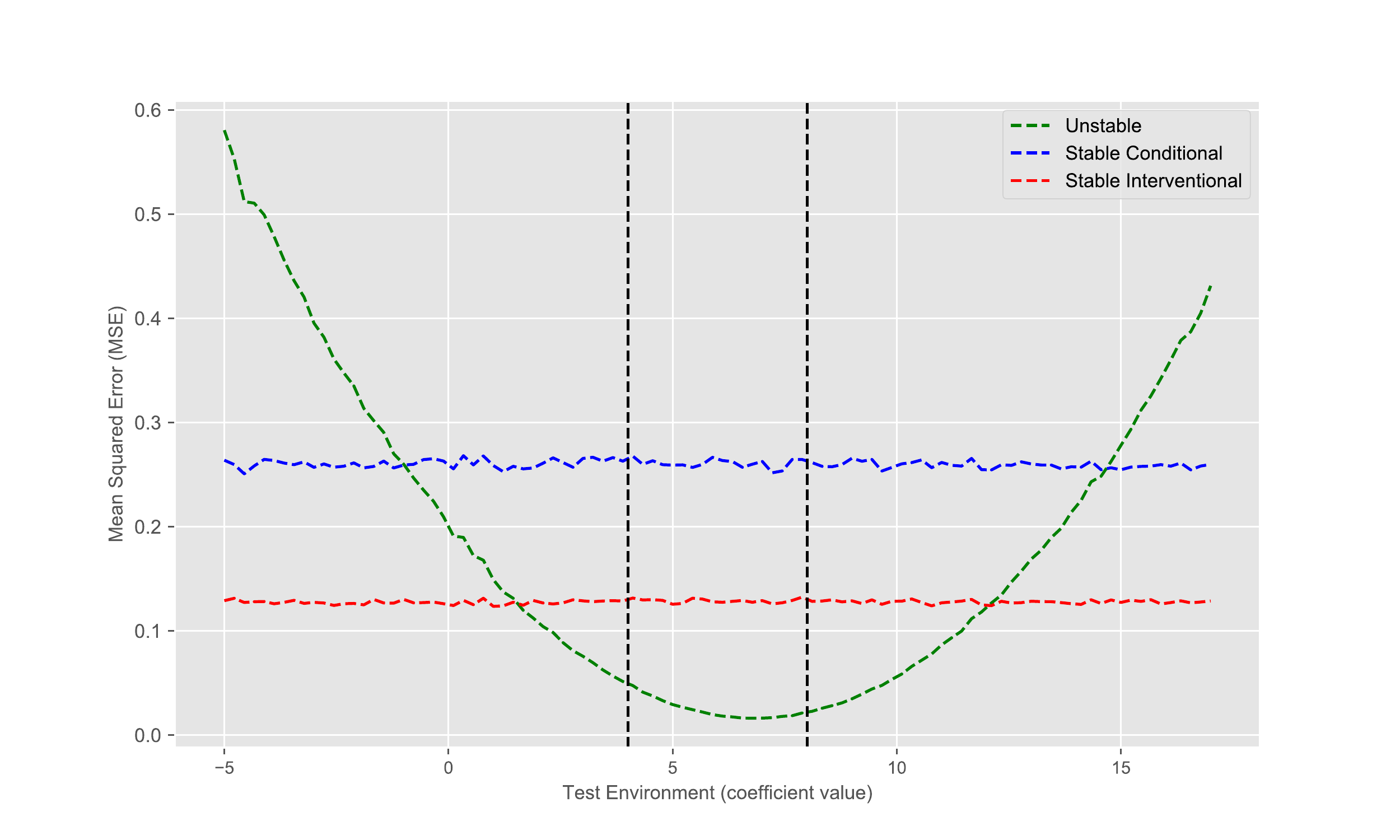}
\caption{MSE of different models as they are evaluated in different test environments. Vertical dashed lines denote the coefficient values associated with the two training environments.}
\label{fig:sim}
\end{figure}

To analyze the effect of the magnitude of shifts on the performance of stable and unstable models, we simulated data from a zero-mean linear Gaussian system according to the ADMG in Fig \ref{fig:subgraph}a. We shift the mechanism of $X_1$ by changing the coefficient of the unobserved confounder between $Y$ and $X_1$ in the structural equation for $X_1$.\footnote{Exact simulation details in Appendix \ref{app:exp}.} We generated two source datasets (environments denoted by the vertical dashed lines in Fig \ref{fig:sim}) and trained three linear regression models: an unstable (green) model of $E[Y|X_1,X_2,X_3]$, a stable conditional (blue) model of $E[Y|X_3]$, and a stable interventional (red) model of $E[Y|do(X_1), X_2, X_3]$. We then evaluated the mean squared error (MSE) of these models (plotted in Fig \ref{fig:sim}) in test environments created by varying the unstable coefficient.

As expected, under small shifts the unstable model outerperforms both stable models; but the unstable model's error grows rapidly with the magnitude of the shift quickly performing much worse than the stable models. On the other hand, the stable models have performance that is consistent across environments as desired. The interventional model achieves lower MSE than the conditional because it uses stable information in the $P(X_2|Y,X_1)$ term that the conditional model does not.

\section{Conclusion}

In this paper we addressed one of the primary challenges facing the deployment of machine learning in safety-critical applications: shifts in environment between training and deployment. To this end, we proposed \textsc{I-Spec}, an end-to-end framework that lets us go from data to models that are guaranteed to be stable to shifts. Like existing graphical methods, \textsc{I-Spec} does not require data from the target environment and is able to capture more stable information in the data than methods which use stable conditional distributions. An important difference, however, is that \textsc{I-Spec} does not require prior knowledge of the full causal graph. As demonstrated in our healthcare experiments, this means \textsc{I-Spec} can be applied to problems in which existing graphical methods would have been too difficult to use. The experiments further demonstrated how the framework can be used to discover shifts, determine which ones to protect against, and train accurate, stable models. To improve \textsc{I-Spec}'s interoperability, a valuable direction for future work would be to handle differing variables sets across datasets. 
\section*{Acknowledgements}
The authors thank Dan Malinsky for helpful discussions about structure learning, the Tetrad developers for promptly providing an implementation of the Degenerate Gaussian score, and Sieu Tran for help in implementation of an earlier version of this work.

\bibliographystyle{apalike}
\bibliography{references}

\begin{thebibliography}{}

\bibitem[Agniel et~al., 2018]{agniel2018biases}
Agniel, D., Kohane, I.~S., and Weber, G.~M. (2018).
\newblock Biases in electronic health record data due to processes within the
  healthcare system: retrospective observational study.
\newblock {\em Bmj}, 361:k1479.

\bibitem[Andrews et~al., 2019]{andrews2019learning}
Andrews, B., Ramsey, J., and Cooper, G.~F. (2019).
\newblock Learning high-dimensional directed acyclic graphs with mixed
  data-types.
\newblock {\em Proceedings of machine learning research}, 104:4.

\bibitem[Arjovsky et~al., 2019]{arjovsky2019invariant}
Arjovsky, M., Bottou, L., Gulrajani, I., and Lopez-Paz, D. (2019).
\newblock Invariant risk minimization.
\newblock {\em arXiv preprint arXiv:1907.02893}.

\bibitem[Bischl et~al., 2016]{mlr}
Bischl, B., Lang, M., Kotthoff, L., Schiffner, J., Richter, J., Studerus, E.,
  Casalicchio, G., and Jones, Z.~M. (2016).
\newblock mlr: Machine learning in r.
\newblock {\em Journal of Machine Learning Research}, 17(170):1--5.

\bibitem[Caruana et~al., 2015]{caruana2015intelligible}
Caruana, R., Lou, Y., Gehrke, J., Koch, P., Sturm, M., and Elhadad, N. (2015).
\newblock Intelligible models for healthcare: Predicting pneumonia risk and
  hospital 30-day readmission.
\newblock In {\em Proceedings of the 21th ACM SIGKDD international conference
  on knowledge discovery and data mining}, pages 1721--1730.

\bibitem[Ensign et~al., 2018]{ensign2018runaway}
Ensign, D., Friedler, S.~A., Neville, S., Scheidegger, C., and
  Venkatasubramanian, S. (2018).
\newblock Runaway feedback loops in predictive policing.
\newblock In {\em Conference on Fairness, Accountability and Transparency},
  pages 160--171.

\bibitem[Gretton et~al., 2009]{gretton2009covariate}
Gretton, A., Smola, A., Huang, J., Schmittfull, M., Borgwardt, K., and
  Sch{\"o}lkopf, B. (2009).
\newblock Covariate shift by kernel mean matching.
\newblock {\em Dataset shift in machine learning}, 3(4):5.

\bibitem[He and Geng, 2016]{he2016causal}
He, Y. and Geng, Z. (2016).
\newblock Causal network learning from multiple interventions of unknown
  manipulated targets.
\newblock {\em arXiv preprint arXiv:1610.08611}.

\bibitem[Heinze-Deml and Meinshausen, 2017]{heinze2017conditional}
Heinze-Deml, C. and Meinshausen, N. (2017).
\newblock Conditional variance penalties and domain shift robustness.
\newblock {\em arXiv preprint arXiv:1710.11469}.

\bibitem[Heinze-Deml et~al., 2018]{heinze2018invariant}
Heinze-Deml, C., Peters, J., and Meinshausen, N. (2018).
\newblock Invariant causal prediction for nonlinear models.
\newblock {\em Journal of Causal Inference}, 6(2).

\bibitem[Jaber et~al., 2019a]{jaber2019causal}
Jaber, A., Zhang, J., and Bareinboim, E. (2019a).
\newblock Causal identification under markov equivalence: Completeness results.
\newblock In {\em International Conference on Machine Learning}, pages
  2981--2989.

\bibitem[Jaber et~al., 2019b]{jaber2019idc}
Jaber, A., Zhang, J., and Bareinboim, E. (2019b).
\newblock Identification of conditional causal effects under markov
  equivalence.
\newblock In Wallach, H., Larochelle, H., Beygelzimer, A., d'Alch\'{e} Buc, F.,
  Fox, E., and Garnett, R., editors, {\em Advances in Neural Information
  Processing Systems 32}, pages 11512--11520, Vancouver, Canada. Curran
  Associates, Inc.

\bibitem[Kuang et~al., 2018]{kuang2018stable}
Kuang, K., Cui, P., Athey, S., Xiong, R., and Li, B. (2018).
\newblock Stable prediction across unknown environments.
\newblock In {\em Proceedings of the 24th ACM SIGKDD International Conference
  on Knowledge Discovery \& Data Mining}, pages 1617--1626. ACM.

\bibitem[Le~Gall et~al., 1993]{le1993new}
Le~Gall, J.-R., Lemeshow, S., and Saulnier, F. (1993).
\newblock A new simplified acute physiology score (saps ii) based on a
  european/north american multicenter study.
\newblock {\em Jama}, 270(24):2957--2963.

\bibitem[Lum and Isaac, 2016]{lum2016predict}
Lum, K. and Isaac, W. (2016).
\newblock To predict and serve?
\newblock {\em Significance}, 13(5):14--19.

\bibitem[Maathuis et~al., 2015]{maathuis2015generalized}
Maathuis, M.~H., Colombo, D., et~al. (2015).
\newblock A generalized back-door criterion.
\newblock {\em The Annals of Statistics}, 43(3):1060--1088.

\bibitem[Magliacane et~al., 2018]{magliacane2018domain}
Magliacane, S., van Ommen, T., Claassen, T., Bongers, S., Versteeg, P., and
  Mooij, J.~M. (2018).
\newblock Domain adaptation by using causal inference to predict invariant
  conditional distributions.
\newblock In {\em Advances in Neural Information Processing Systems}, pages
  10869--10879.

\bibitem[Mooij et~al., 2016]{mooij2016joint}
Mooij, J.~M., Magliacane, S., and Claassen, T. (2016).
\newblock Joint causal inference from multiple contexts.
\newblock {\em arXiv preprint arXiv:1611.10351}.

\bibitem[Muandet et~al., 2013]{muandet2013domain}
Muandet, K., Balduzzi, D., and Sch{\"o}lkopf, B. (2013).
\newblock Domain generalization via invariant feature representation.
\newblock In {\em International Conference on Machine Learning}, pages 10--18.

\bibitem[Pearl, 2009]{pearl2009causality}
Pearl, J. (2009).
\newblock {\em Causality}.
\newblock Cambridge university press.

\bibitem[Pearl and Bareinboim, 2011]{pearl2011transportability}
Pearl, J. and Bareinboim, E. (2011).
\newblock Transportability of causal and statistical relations: A formal
  approach.
\newblock In {\em Twenty-Fifth AAAI Conference on Artificial Intelligence}.

\bibitem[Peters et~al., 2016]{peters2016causal}
Peters, J., B{\"u}hlmann, P., and Meinshausen, N. (2016).
\newblock Causal inference by using invariant prediction: identification and
  confidence intervals.
\newblock {\em Journal of the Royal Statistical Society: Series B (Statistical
  Methodology)}, 78(5):947--1012.

\bibitem[Pirracchio et~al., 2015]{pirracchio2015mortality}
Pirracchio, R., Petersen, M.~L., Carone, M., Rigon, M.~R., Chevret, S., and
  van~der Laan, M.~J. (2015).
\newblock Mortality prediction in intensive care units with the super icu
  learner algorithm (sicula): a population-based study.
\newblock {\em The Lancet Respiratory Medicine}, 3(1):42--52.

\bibitem[Qui{\~n}onero-Candela et~al., 2009]{quionero2009dataset}
Qui{\~n}onero-Candela, J., Sugiyama, M., Schwaighofer, A., and Lawrence, N.~D.
  (2009).
\newblock {\em Dataset shift in machine learning}.
\newblock The MIT Press.

\bibitem[Richardson et~al., 2002]{richardson2002ancestral}
Richardson, T., Spirtes, P., et~al. (2002).
\newblock Ancestral graph markov models.
\newblock {\em The Annals of Statistics}, 30(4):962--1030.

\bibitem[Rojas-Carulla et~al., 2018]{rojas2018invariant}
Rojas-Carulla, M., Sch{\"o}lkopf, B., Turner, R., and Peters, J. (2018).
\newblock Invariant models for causal transfer learning.
\newblock {\em The Journal of Machine Learning Research}, 19(1):1309--1342.

\bibitem[Rothenh{\"a}usler et~al., 2018]{rothenhausler2018anchor}
Rothenh{\"a}usler, D., Meinshausen, N., B{\"u}hlmann, P., and Peters, J.
  (2018).
\newblock Anchor regression: heterogeneous data meets causality.
\newblock {\em arXiv preprint arXiv:1801.06229}.

\bibitem[Schulam and Saria, 2017]{schulam2017reliable}
Schulam, P. and Saria, S. (2017).
\newblock Reliable decision support using counterfactual models.
\newblock In {\em Advances in Neural Information Processing Systems}, pages
  1697--1708.

\bibitem[Shimodaira, 2000]{shimodaira2000improving}
Shimodaira, H. (2000).
\newblock Improving predictive inference under covariate shift by weighting the
  log-likelihood function.
\newblock {\em Journal of statistical planning and inference}, 90(2):227--244.

\bibitem[Shpitser and Pearl, 2006a]{shpitser2006idc}
Shpitser, I. and Pearl, J. (2006a).
\newblock Identification of conditional interventional distributions.
\newblock In {\em 22nd Conference on Uncertainty in Artificial Intelligence,
  UAI 2006}, pages 437--444.

\bibitem[Shpitser and Pearl, 2006b]{shpitser2006identification}
Shpitser, I. and Pearl, J. (2006b).
\newblock Identification of joint interventional distributions in recursive
  semi-markovian causal models.
\newblock In {\em Proceedings of the National Conference on Artificial
  Intelligence}, volume~21, page 1219.

\bibitem[Spirtes et~al., 2000]{spirtes2000causation}
Spirtes, P., Glymour, C.~N., Scheines, R., Heckerman, D., Meek, C., Cooper, G.,
  and Richardson, T. (2000).
\newblock {\em Causation, prediction, and search}.
\newblock MIT press.

\bibitem[Subbaswamy et~al., 2019a]{subbaswamy2019hierarchy}
Subbaswamy, A., Chen, B., and Saria, S. (2019a).
\newblock The hierarchy of stable distributions and operators to trade off
  stability and performance.
\newblock {\em arXiv preprint arXiv:1905.11374}.

\bibitem[Subbaswamy and Saria, 2018]{subbaswamy2018counterfactual}
Subbaswamy, A. and Saria, S. (2018).
\newblock Counterfactual normalization: Proactively addressing dataset shift
  using causal mechanisms.
\newblock In {\em UAI}, pages 947--957.

\bibitem[Subbaswamy and Saria, 2019]{subbaswamy2019development}
Subbaswamy, A. and Saria, S. (2019).
\newblock From development to deployment: dataset shift, causality, and
  shift-stable models in health ai.
\newblock {\em Biostatistics}.

\bibitem[Subbaswamy et~al., 2019b]{subbaswamy2019preventing}
Subbaswamy, A., Schulam, P., and Saria, S. (2019b).
\newblock Preventing failures due to dataset shift: Learning predictive models
  that transport.
\newblock In {\em The 22nd International Conference on Artificial Intelligence
  and Statistics}, pages 3118--3127.

\bibitem[Textor et~al., 2016]{textor2016robust}
Textor, J., van~der Zander, B., Gilthorpe, M.~S., Li{\'s}kiewicz, M., and
  Ellison, G.~T. (2016).
\newblock Robust causal inference using directed acyclic graphs: the r package
  ‘dagitty’.
\newblock {\em International journal of epidemiology}, 45(6):1887--1894.

\bibitem[Triantafillou et~al., 2010]{triantafillou2010learning}
Triantafillou, S., Tsamardinos, I., and Tollis, I. (2010).
\newblock Learning causal structure from overlapping variable sets.
\newblock In {\em Proceedings of the Thirteenth International Conference on
  Artificial Intelligence and Statistics}, pages 860--867.

\bibitem[Yu et~al., 2019]{yu2019learning}
Yu, K., Liu, L., and Li, J. (2019).
\newblock Learning markov blankets from multiple interventional data sets.
\newblock {\em IEEE transactions on neural networks and learning systems}.

\bibitem[Zech et~al., 2018]{zech2018variable}
Zech, J.~R., Badgeley, M.~A., Liu, M., Costa, A.~B., Titano, J.~J., and
  Oermann, E.~K. (2018).
\newblock Variable generalization performance of a deep learning model to
  detect pneumonia in chest radiographs: A cross-sectional study.
\newblock {\em PLoS medicine}, 15(11):e1002683.

\bibitem[Zhang, 2006]{zhang2006causal}
Zhang, J. (2006).
\newblock {\em Causal inference and reasoning in causally insufficient
  systems}.
\newblock PhD thesis, Carnegie Mellon University.

\bibitem[Zhang, 2008a]{zhang2008causal}
Zhang, J. (2008a).
\newblock Causal reasoning with ancestral graphs.
\newblock {\em Journal of Machine Learning Research}, 9(Jul):1437--1474.

\bibitem[Zhang, 2008b]{zhang2008completeness}
Zhang, J. (2008b).
\newblock On the completeness of orientation rules for causal discovery in the
  presence of latent confounders and selection bias.
\newblock {\em Artificial Intelligence}, 172(16-17):1873--1896.

\bibitem[Zhang et~al., 2017]{zhang2017causal}
Zhang, K., Huang, B., Zhang, J., Glymour, C., and Sch{\"o}lkopf, B. (2017).
\newblock Causal discovery from nonstationary/heterogeneous data: Skeleton
  estimation and orientation determination.
\newblock In {\em IJCAI: Proceedings of the Conference}, volume 2017, page
  1347. NIH Public Access.

\end{thebibliography}

\onecolumn
\newpage
\appendix
\section{Invariant Conditionals in PAGs and the CIDP Algorithm}\label{app:CIDP}
\subsection{Additional PAG Preliminaries}
We first provide some additional definitions and facts about PAGs. These are relevant for understanding Theorem \ref{thm:invar}.

The $d$-separation criterion in DAGs is naturally generalized to encode conditional independences in mixed graphs through \emph{$m$-separation} \citep{richardson2002ancestral}. A path is \emph{$m$-connecting} given a set $\mathbf{Z}$ if every collider (e.g., v-structure like $\rightarrow V \leftarrow $) on the path is in $an(\mathbf{Z})$ and all non-colliders are not in $\mathbf{Z}$.

In PAGs we must also account for uncertainty in whether or not a node is a collider along a path. Letting $*$ denote a wildcard edge mark (head, tail, or circle), a node $V_j$ is a \emph{definite non-collider} if there is at least one edge out of $V_j$ on the path, or if $V_i\starcirc V_j \circstar V_k$ is a subpath and $V_i$ and $V_k$ are not adjacent. A \emph{definite status} path is one in which every node is either a collider or definite non-collider \citep{maathuis2015generalized}. These definitions let us extend $m$-connection (and separation) to PAGs: a definite status path is \emph{m-connecting} given $\mathbf{Z}$ if every definite non-collider is in $\mathbf{Z}$ and every collider on the path is in $an(\mathbf{Z})$.

\subsection{Invariance Criterion for Conditionals}
\begin{theorem}[\citet{zhang2008causal}, Theorem 30] \label{thm:invar}
Suppose $\mathcal{P}$ is the PAG over the observed variables $\mathbf{O}$. For any $\mathbf{X,Y,Z} \subseteq \mathbf{O}$ such that $\mathbf{X} \intersection \mathbf{Y} = \mathbf{Y} \intersection \mathbf{Z} = \emptyset$, $P(\mathbf{y}|\mathbf{z})$ is invariant under interventions on $\mathbf{X}$ in $\mathcal{P}$ if and only if

\vspace{-0.5\baselineskip}
\begin{description}[align=left,leftmargin=*,labelindent=0in,topsep=0ex,itemsep=0ex,partopsep=0ex,parsep=0ex]
\item[1)] for every $X \in \mathbf{X}\intersection \mathbf{Z}$, every definite status m-connecting path, if any, between $X$ and any member of $\mathbf{Y}$ given $\mathbf{Z}\setminus \{X\}$ is out of $X$ with a visible edge;
\item[2)] for every $X \in \mathbf{X}\intersection (PossibleAn(\mathbf{Z})\setminus \mathbf{Z})$, there is no definite status m-connecting path between $X$ and any member of $\mathbf{Y}$ given $\mathbf{Z}$;
\item[3)] for every $X \in \mathbf{X}\setminus PossibleAn(\mathbf{Z})$, every definite status m-connecting path, if any, between $X$ and any member of $\mathbf{Y}$ given $\mathbf{Z}\setminus \{X\}$ is into $X$.
\end{description}
\end{theorem}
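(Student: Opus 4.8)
The plan is to prove the two directions separately after first reducing the PAG-level claim to a family of single-graph claims, one for each member of the Markov equivalence class $\{\mathcal{P}\}$. Invariance of $P(\mathbf{y}|\mathbf{z})$ under interventions on $\mathbf{X}$ in $\mathcal{P}$ means $P(\mathbf{y}|\mathbf{z}) = P_{\mathbf{x}}(\mathbf{y}|\mathbf{z})$ in \emph{every} ADMG inducing a MAG in $\{\mathcal{P}\}$, so I would first characterize this equality in a single fixed ADMG $\mathcal{G}$ and then lift. In the fixed graph the natural tool is Pearl's do-calculus \citep{pearl2009causality}, applied after partitioning $\mathbf{X}$ exactly as the three clauses suggest, but using \emph{actual} ancestry in $\mathcal{G}$: $\mathbf{X}\cap\mathbf{Z}$, $\mathbf{X}\cap(an(\mathbf{Z})\setminus\mathbf{Z})$, and $\mathbf{X}\setminus an(\mathbf{Z})$. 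For $X\in\mathbf{X}\setminus an(\mathbf{Z})$, intervening deletes the edges into $X$; if every active path from $X$ to $\mathbf{Y}$ given $\mathbf{Z}\setminus\{X\}$ is into $X$, then after deletion $X\ci\mathbf{Y}\mid\mathbf{Z}$, and Rule 3 removes the action (clause 3). For $X\in\mathbf{X}\cap\mathbf{Z}$, the ``out of $X$ with a visible edge'' condition says the only active routes leave $X$ causally and carry no latent confounding, which is exactly what Rule 2 needs to trade the intervention for the conditioning already present (clause 1). For $X\in\mathbf{X}\cap(an(\mathbf{Z})\setminus\mathbf{Z})$, such an $X$ can reach $\mathbf{Y}$ both directly and through $\mathbf{Z}$, so neither rule alone applies and one needs total m-separation, giving the ``no active path'' requirement (clause 2). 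Chaining these applications yields the single-graph equivalence.

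To lift to the equivalence class (the ``if'' direction), I would rely on three correspondences between PAG-level and MAG-level notions. First, by the characterization of m-separation over an equivalence class \citep{maathuis2015generalized}, the absence of a definite-status m-connecting path in $\mathcal{P}$ is equivalent to m-separation in every MAG of $\{\mathcal{P}\}$, and the ``into $X$'' and ``out of $X$ with a visible edge'' qualifiers transfer to every member. Second, a visible edge in $\mathcal{P}$ is an unconfounded, tail-at-$X$ edge in every MAG in the class, so clause 1 rules out confounding uniformly. Third, $PossibleAn(\mathbf{Z})$ in $\mathcal{P}$ is exactly the union over the class of $an(\mathbf{Z})$. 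The last point forces a design choice worth flagging: because a given $X$ may be an ancestor of $\mathbf{Z}$ in some members and not in others, the PAG clauses must be strong enough to cover whichever single-graph case arises in each MAG, which is why the partition is by \emph{possible} rather than definite ancestry. Granting these correspondences, the PAG conditions imply the fixed-graph conditions hold simultaneously in every member, so the single-graph result delivers invariance throughout the class.

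The ``only if'' direction is completeness, and I expect it to be the main obstacle. I would argue the contrapositive: if some clause fails, exhibit a MAG $\mathcal{M}\in\{\mathcal{P}\}$ and a parameterized SCM inducing it for which $P(\mathbf{y}|\mathbf{z})\neq P_{\mathbf{x}}(\mathbf{y}|\mathbf{z})$. A violating definite-status path must be realizable as a genuinely active m-connecting path in some member of the class, which is where I would invoke the completeness of the FCI orientation rules and the theory of Markov equivalence for MAGs \citep{spirtes2000causation,zhang2008completeness,richardson2002ancestral} to pick the right $\mathcal{M}$; along that path I would instantiate generic mechanisms (e.g., linear-Gaussian with algebraically independent coefficients) so the induced dependence does not vanish. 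The delicate points are making each type of violation produce an honest discrepancy --- a failure of visibility in clause 1 must manifest as real latent confounding in some member, and a non-into path in clause 3 must survive edge deletion --- while ensuring no fortuitous cancellation from competing paths and that a single construction isolates exactly the failed clause rather than inadvertently violating another. Managing this interaction, together with verifying that definite collider/non-collider status on the chosen path truly transfers to an active path in $\mathcal{M}$, is the part that demands the full PAG-specific combinatorics and is where the real work lies.
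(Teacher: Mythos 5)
The first thing to note is that the paper does not prove this statement at all: it is imported verbatim as \citet{zhang2008causal}, Theorem 30, and used as a black box. The closest in-paper material is Lemma \ref{lemma:mag}, which merely re-expresses the three conditions as m-separation statements in derived MAGs ($\mathcal{R}_{\underline{X}}$, $\mathcal{R}$, $\mathcal{R}_{\overline{X}}$) for computational convenience; it does not establish the theorem. So there is no in-paper proof to compare your attempt against, and your proposal should be judged as a reconstruction of Zhang's original argument.

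On those terms, your soundness (``if'') direction is a reasonable outline and matches the standard route: characterize invariance in a single ADMG via a do-calculus/manipulation-theorem argument keyed to the partition of $\mathbf{X}$ by actual ancestry, then lift to the class using the correspondence between definite-status m-connection in $\mathcal{P}$ and m-connection in every MAG of $\{\mathcal{P}\}$, and the fact that a visible edge is unconfounded in every member. Be aware, though, that these ``correspondences'' are themselves substantial results (they are essentially Lemma 7.4 and related facts in \citet{maathuis2015generalized} and Zhang's Lemma 5.1.9), not freebies, and the claim that $PossibleAn(\mathbf{Z})$ is exactly the union over the class of $an(\mathbf{Z})$ requires the arrowhead/tail completeness of the FCI orientation rules. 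The genuine gap is the ``only if'' direction: you correctly identify that completeness requires exhibiting, for any violated clause, a single MAG in $\{\mathcal{P}\}$ that \emph{simultaneously} realizes the violating definite-status path as m-connecting, realizes the needed ancestral/visibility configuration (e.g., $X\in an(\mathbf{Z})$ when $X$ is only a possible ancestor, or genuine latent confounding when an edge is invisible), and admits a parameterization with no cancellation --- but you only name these obstacles rather than resolve them. That joint realizability argument is precisely the content of Zhang's Theorem 30, so as written the proposal is a plan for the hard half rather than a proof of it.
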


As originally written, verifying Theorem \ref{thm:invar} involves checking individual definite status paths in the PAG. We will reduce the conditions to equivalent ones that can be verified in MAGs derived from the PAG that will, in general, have fewer paths, and for which efficient m-separation routines have been implemented (e.g., in the \texttt{R} package \texttt{dagitty} \citep{textor2016robust}). First, we require the following definitions from \citet{maathuis2015generalized}, with the addition of $\mathcal{R}_{\overline{X}}$.

\begin{definition}[$\mathcal{R}^*$, $\mathcal{R}_{\underline{X}}$, and $\mathcal{R}_{\overline{X}}$]
Let $X$ be a vertex in PAG $\mathcal{P}$. Define $\mathcal{R}^*$ to be the set of MAGs in the equivalence class described by $\mathcal{P}$ that have the same number of edges into $X$ as in $\mathcal{P}$. For any $\mathcal{R}\in\mathcal{R}^*$, let $\mathcal{R}_{\underline{X}}$ be the graph obtained from $\mathcal{R}$ by removing all directed edges out of $X$ that are visible in $\mathcal{P}$. For any $\mathcal{R}\in\mathcal{R}^*$, let $\mathcal{R}_{\overline{X}}$ be the graph obtained from $\mathcal{R}$ by removing all edges (directed or bidirected) into $X$.
\end{definition}

Theorem \ref{thm:invar} can now be verified via Lemma \ref{lemma:mag}:

\begin{lemma}\label{lemma:mag}
For $\mathbf{X}$, $\mathbf{Y}$, $\mathbf{Z}$ as in Theorem \ref{thm:invar} and $\mathcal{R}\in\mathcal{R}^*$, the Theorem \ref{thm:invar} conditions are equivalent to
\vspace{-0.5\baselineskip}
\begin{description}[align=left,leftmargin=*,labelindent=0in,topsep=0ex,itemsep=0ex,partopsep=0ex,parsep=0ex]
    \item[1)] for every $X \in \mathbf{X}\intersection \mathbf{Z}$, $X \ci \mathbf{Y} | \mathbf{Z}\setminus \{X\}$ in $\mathcal{R}_{\underline{X}}$;
    \item[2)] for every $X \in \mathbf{X}\intersection (PossibleAn(\mathbf{Z})\setminus \mathbf{Z})$, $X \ci \mathbf{Y}|\mathbf{Z}$ in  $\mathcal{R}\in \mathcal{R}^* \subseteq \{ \mathcal{P} \}$ ;
    \item[3)] for $X \in \mathbf{X}\setminus PossibleAn(\mathbf{Z})$,  $X \ci \mathbf{Y} | \mathbf{Z}$ in $\mathcal{R}_{\overline{X}}$.
\end{description}
\end{lemma}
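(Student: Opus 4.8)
The plan is to prove the three equivalences one vertex at a time, exploiting the fact that $\mathbf{X}\cap\mathbf{Z}$, $\mathbf{X}\cap(PossibleAn(\mathbf{Z})\setminus\mathbf{Z})$, and $\mathbf{X}\setminus PossibleAn(\mathbf{Z})$ partition $\mathbf{X}$ (using $\mathbf{Z}\subseteq PossibleAn(\mathbf{Z})$), so that Theorem~\ref{thm:invar} imposes exactly one condition on each vertex $X$ and it suffices to match each condition with its counterpart in Lemma~\ref{lemma:mag}. The common engine I would extract from \citet{maathuis2015generalized} is a bridging principle: definite status m-connection in $\mathcal{P}$ corresponds to ordinary m-connection in the MAGs of the equivalence class, and restricting to $\mathcal{R}^*$ pins down the marks at $X$ so that ``into $X$,'' ``out of $X$,'' and ``out of $X$ along a visible edge'' become unambiguous and, crucially, agree across every $\mathcal{R}\in\mathcal{R}^*$. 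Concretely, every definite arrowhead into $X$ in $\mathcal{P}$ is preserved in each $\mathcal{R}$, every circle mark at $X$ is realized as a tail, and visibility of a directed edge out of $X$ is a property of $\mathcal{P}$ inherited by every $\mathcal{R}$. This is exactly what makes the surgeries $\mathcal{R}_{\underline{X}}$ and $\mathcal{R}_{\overline{X}}$ well defined and independent of the chosen representative.

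I would dispatch conditions (1) and (3) first, as they are the most direct. For (3), where $X\notin PossibleAn(\mathbf{Z})$ (hence $X\notin\mathbf{Z}$ and $\mathbf{Z}\setminus\{X\}=\mathbf{Z}$), the graph $\mathcal{R}_{\overline{X}}$ deletes every edge into $X$, so the only paths leaving $X$ in $\mathcal{R}_{\overline{X}}$ begin with an edge out of $X$; since $\mathcal{R}^*$ realizes the maximal set of out-edges at $X$, the separation $X\ci\mathbf{Y}\mid\mathbf{Z}$ in $\mathcal{R}_{\overline{X}}$ holds iff no m-connecting path out of $X$ survives, which by the bridging principle is precisely the PAG statement ``every definite status m-connecting path between $X$ and $\mathbf{Y}$ given $\mathbf{Z}$ is into $X$.'' Condition (1) is symmetric: $\mathcal{R}_{\underline{X}}$ deletes exactly the directed edges out of $X$ that are visible in $\mathcal{P}$, so every path leaving $X$ in $\mathcal{R}_{\underline{X}}$ either points into $X$ or leaves $X$ along an edge that is not visible in $\mathcal{P}$ — precisely the paths Theorem~\ref{thm:invar}(1) forbids — and therefore $X\ci\mathbf{Y}\mid\mathbf{Z}\setminus\{X\}$ in $\mathcal{R}_{\underline{X}}$ is equivalent to ``every definite status m-connecting path is out of $X$ with a visible edge.'' In both cases I must verify that deleting edges incident to $X$ neither creates nor destroys connection elsewhere; this holds because the deletions touch only the endpoint $X$ while the collider/non-collider status of every interior node is untouched.

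Condition (2) is where the real work lies, and the converse direction there is the step I expect to be the main obstacle. Here $X\in PossibleAn(\mathbf{Z})\setminus\mathbf{Z}$, and I must show the PAG statement ``no definite status m-connecting path between $X$ and $\mathbf{Y}$ given $\mathbf{Z}$'' is equivalent to the single-graph separation $X\ci\mathbf{Y}\mid\mathbf{Z}$ in a representative $\mathcal{R}\in\mathcal{R}^*$. The forward direction is routine: a definite status m-separation in $\mathcal{P}$ is an m-separation in every MAG of $\{\mathcal{P}\}\supseteq\mathcal{R}^*$, so it holds in $\mathcal{R}$. The hard part is lifting a definite status m-connecting path in $\mathcal{P}$ to a genuinely m-connecting path in the chosen $\mathcal{R}$: m-connection along such a path requires each definite collider to lie in $an(\mathbf{Z})$, and ancestry is orientation-dependent, so a path that ``connects'' in $\mathcal{P}$ (where colliders need only be \emph{possible} ancestors of $\mathbf{Z}$) need not have its colliders be actual ancestors of $\mathbf{Z}$ in an arbitrary MAG. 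My plan is to use $X\in PossibleAn(\mathbf{Z})$ together with the orientation-fixing of $\mathcal{R}^*$ to certify, for the representative $\mathcal{R}$, that the required colliders remain ancestors of $\mathbf{Z}$ — invoking the machinery of \citet{maathuis2015generalized} guaranteeing a compatible MAG in $\mathcal{R}^*$ that realizes these ancestral relations — thereby producing the desired m-connecting path in $\mathcal{R}$ and contradicting separation. I would close by observing that, since all three verifications depend on $\mathcal{R}$ only through its (shared) marks at $X$ and through separation statements the bridging principle ties back to $\mathcal{P}$, the equivalences are independent of which representative $\mathcal{R}\in\mathcal{R}^*$ is selected.
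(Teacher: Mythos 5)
Your overall strategy --- partition $\mathbf{X}$ into the three cases and, for each, translate the path condition of Theorem \ref{thm:invar} in $\mathcal{P}$ into an m-separation statement in a (possibly surgered) representative MAG via the correspondence between definite status m-connecting paths in $\mathcal{P}$ and m-connecting paths in the MAGs of $\{\mathcal{P}\}$ --- is the same as the paper's, and your handling of conditions 1) and 3) tracks the paper's proof: condition 1) is exactly Lemma 7.4 of \citet{maathuis2015generalized}, which the paper simply cites, and condition 3) is where the paper does its real work, arguing both directions of the path correspondence for $\mathcal{R}_{\overline{X}}$. Where you diverge is condition 2), which you flag as ``where the real work lies.'' The paper dispatches it in two sentences: no surgery is performed on the graph in that case, all MAGs in $\{\mathcal{P}\}$ share exactly the same conditional independences, and definite-status m-separation in $\mathcal{P}$ characterizes precisely those shared independences, so checking $X\ci\mathbf{Y}\mid\mathbf{Z}$ in any one representative is definitionally equivalent to condition 2) of Theorem \ref{thm:invar}. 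The ``obstacle'' you identify --- that a definite status m-connecting path in $\mathcal{P}$ might fail to lift because its colliders are only \emph{possible} ancestors of $\mathbf{Z}$ --- is resolved by the known fact the paper itself invokes in its condition 3) argument: a definite status m-connecting path in $\mathcal{P}$ has a corresponding m-connecting path in \emph{every} MAG of $\{\mathcal{P}\}$ \citep{zhang2008causal}. Your plan of constructing one compatible MAG realizing the needed ancestral relations would eventually work, but it re-derives this citable fact and still needs the shared-independence property to transfer the conclusion to the particular $\mathcal{R}$, so it buys nothing over the paper's one-liner. Two smaller cautions: in condition 3), the claim that deleting edges into $X$ ``neither creates nor destroys connection elsewhere'' is not automatic, since a collider on the path could a priori reach $\mathbf{Z}$ only through $X$; the paper's proof explicitly uses $X\notin PossibleAn(\mathbf{Z})$ to exclude this, and you should too. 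And the ``if'' direction of that correspondence (from an m-connecting path in the MAG back to a definite status m-connecting path in $\mathcal{P}$) is itself a nontrivial result (Lemma 5.1.9 of \citet{zhang2006causal}) that your ``bridging principle'' should cite rather than assert.
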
%

\begin{proof}[Proof of Lemma \ref{lemma:mag}]
Consider each condition in turn.
\begin{enumerate}
    \item This equivalence is a restatement of Lemma 7.4 in \citet{maathuis2015generalized} (which states the condition as there is no m-connecting path between $X$ and $Y$ given $\mathbf{Z}\setminus\{X\}$ in $\mathcal{R}_{\underline{X}}$, i.e., they are m-separated).
    
    \item This equivalence follows from the definition of a PAG. All MAGs in the equivalence class represented by $\mathcal{P}$ share the same conditional independences. Thus, if $X \ci \mathbf{Y} | \mathbf{Z}$ in one MAG in $\{\mathcal{P}\}$ then $X \ci \mathbf{Y} | \mathbf{Z}$ in $\mathcal{P}$. Similarly, if $X \ci \mathbf{Y} | \mathbf{Z}$ in $\mathcal{P}$ then $X \ci \mathbf{Y} | \mathbf{Z}$ in all MAGs in  $\{\mathcal{P}\}$ .
    
    \item To prove this equivalence we must prove the following: Let $X \in \mathbf{X}\setminus PossibleAn(\mathbf{Z})$. Then there is a definite status m-connecting path from $X$ to $Y\in\mathbf{Y}$ given in $\mathcal{P}$ that is not into $X$ if and only if there is an m-connecting path between X and $Y$ given $\mathbf{Z}$ in $\mathcal{R}_{\overline{X}}$. The style of the proof follows that of the proof of Lemma 7.4 in \citet{maathuis2015generalized}.
    
    First, the only if direction. Suppose there is definite status m-connecting path, $p$, between $X$ and $Y$ given $\mathbf{Z}$ in $\mathcal{P}$ that is not into $X$. Let $p'$ be this path in $\mathcal{R}$ and $p''$ be this path in $\mathcal{R}_{\overline{X}}$. As noted in \citet{zhang2008causal}, if a path is definite status m-connecting, then the corresponding path in every MAG in $\mathcal{P}$ is m-connecting. Thus, we know that $p'$ is m-connecting. Further, $p'$ is out of $X$ because $p$ was not into $X$, and by construction $\mathcal{R}$ has no additional edges into $X$ when compared to $\mathcal{P}$. Since $\mathcal{R}_{\overline{X}}$ only deletes edges into $X$ when compared to $\mathcal{R}$, the path $p''$ is no different from $p'$ and is also out of $X$. $p''$ is also m-connecting because the only way for $p''$ to not be m-connecting while $p'$ is, would be for $p'$ to contain a collider that became inactive after deleting edges into $X$. However, we know that $p'$ (and, thus, $p''$) are not collider paths since $X \not \in An(\mathbf{Z})$, and thus the paths $p'$ and $p''$ are directed and out of $X$. Thus, $p''$ is m-connecting and out of $X$ in $\mathcal{R}_{\overline{X}}$.
    
    Now the if direction. Suppose there is an m-connecting path $p''$ between $X$ and $Y$ given $\mathbf{Z}$ in $\mathcal{R}_{\overline{X}}$. Because this path is out of $X$, the corresponding path $p'$ in $\mathcal{R}$ is unaffected and is also m-connecting. By Lemma 5.1.9 in \citet{zhang2006causal}, since $\mathcal{R}\in\{\mathcal{P}\}$, this means there is a definite status m-connecting path, $p$, between $X$ and $Y$ given $\mathbf{Z}$ in $\mathcal{P}$ that is not into $X$.
\end{enumerate}
\end{proof}

\subsection{CIDP Algorithm}
We briefly restate key aspects of the CIDP algorithm here. For full details see \citet{jaber2019idc}.

\citet{jaber2019idc} introduce additional constructs that are used in the CIDP algorithm. In what follows, we will use $Pa^+(\mathbf{X})$ ($Ch^+(\mathbf{X})$) to denote the union of $\mathbf{X}$ and the set of possible parents (children). Similarly, we define $An^+(\mathbf{X})$. We will let $Pa^*(\mathbf{X})$ denote $Pa^+(\mathbf{X})$ excluding the possible parents of $\mathbf{X}$ due to circle edges. We similarly define $Ch^*(\mathbf{X})$. Let a circle path be a path on which all edge marks are $\circ$. Define a \emph{bucket} to be a closure of nodes connected with circle paths as a bucket.

\begin{definition}[PC-Component]
In a PAG or any induced subgraph thereof, two nodes are in the same possible c-component (pc-component) if there is a path between them such that (1) all non-endpoint nodes along the path are colliders, and (2) none of the edges are visible.
\end{definition}

Note that two nodes are in the same \emph{definite c-component} if they are connected by a bi-directed path.

The following proposition gives an identification criterion for interventional distributions corresonding to interventions on a bucket.
\begin{proposition}[\cite{jaber2019idc}[Proposition 2]]\label{prop:bucket}
Let $\mathcal{P}$ denote a PAG over $\mathbf{V}$, $\mathbf{T}$ be a union of a subset of buckets in $\mathcal{P}$, and $\mathbf{X}\subset \mathbf{T}$ be a bucket. Given $P_{\mathbf{V}\setminus\mathbf{T}}$ (i.e., $Q[T]$), and a partial topological order of buckets $\mathbf{B}_1 < \dots < \mathbf{B}_m$ with respect to $\mathcal{P}_\mathbf{T}$ (induced subgraph), $Q[\mathbf{T}\setminus\mathbf{X}]$ is identifiable if and only if, in $\mathcal{P}_\mathbf{T}$, there does not exist $Z\in\mathbf{X}$ such that Z has a possible child $C\not\in\mathbf{X}$ that is in the pc-component of Z. If identifiable, then the expression is given by
$$Q[\mathbf{T}\setminus\mathbf{X}] = \frac{P_{\mathbf{V}\setminus\mathbf{T}}}{\prod_{i|\mathbf{B}_i\subseteq S^\mathbf{X}}P_{\mathbf{V}\setminus\mathbf{T}}(\mathbf{B}_i|\mathbf{B}^{(i-1)})}\times \sum_\mathbf{X} \prod_{i|\mathbf{B}_i\subseteq S^\mathbf{X}} P_{\mathbf{V}\setminus\mathbf{T}}(\mathbf{B}_i|\mathbf{B}^{(i-1)},$$
where $S^\mathbf{X}$ is the union of the definite c-components of the members of $\mathbf{X}$ in $\mathcal{P}_\mathbf{T}$, and $\mathbf{B}^{(i-1)}$ denotes the set of nodes preceding bucket $\mathbf{B}_i$ in the partial order.
\end{proposition}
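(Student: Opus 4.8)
The plan is to prove both directions by reducing the PAG-level statement to the established c-component (Q-)factorization for a single ADMG, and then arguing that the reduction is uniform across the whole equivalence class $\{\mathcal{P}\}$. The central object is the c-component factorization of Tian underlying the ID algorithm \citep{shpitser2006identification}: in any fixed ADMG, once $Q[\mathbf{T}]$ is available together with a topological order of $\mathbf{T}$, each $Q[\cdot]$ over a c-component of $\mathcal{P}_\mathbf{T}$ is recoverable as a ratio/product of the observed topological factors of $Q[\mathbf{T}]$, and marginalizing out a set is legitimate exactly when that set does not split a c-component. First I would translate ``does not split a c-component'' into the PAG vocabulary of the statement, and then verify that the displayed formula computes the same functional of $Q[\mathbf{T}]$ in every ADMG consistent with $\mathcal{P}_\mathbf{T}$.

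For the \emph{if} (identifiability) direction, assume no $Z\in\mathbf{X}$ has a possible child $C\notin\mathbf{X}$ in its pc-component. I would first show that this condition forces the bucket $\mathbf{X}$ to be closed within its c-component structure: every definite c-component meeting $\mathbf{X}$ is contained in $S^\mathbf{X}$, and no member of $\mathbf{X}$ has a genuine child outside $\mathbf{X}$ lying in the same definite c-component in \emph{any} ADMG of $\{\mathcal{P}\}$ (here the pc-component, being the ``possible'' superset of the definite c-component, is exactly the right conservative notion, since a circle mark may be oriented either way across the class). Granting this, the product $\prod_{i\mid \mathbf{B}_i\subseteq S^\mathbf{X}} P_{\mathbf{V}\setminus\mathbf{T}}(\mathbf{B}_i\mid \mathbf{B}^{(i-1)})$ is precisely the Q-factor $Q[S^\mathbf{X}]$; summing $\mathbf{X}$ out of it yields $Q[S^\mathbf{X}\setminus\mathbf{X}]$, and dividing this out of $Q[\mathbf{T}]$ reassembles $Q[\mathbf{T}\setminus\mathbf{X}]$ by the standard c-factorization identity. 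The point requiring care is that this manipulation be valid in every ADMG simultaneously, which follows because the bucket and topological decomposition of $\mathcal{P}_\mathbf{T}$ is shared by all members of the class and the closure condition holds in each of them.

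For the \emph{only if} (non-identifiability) direction, suppose some $Z\in\mathbf{X}$ has a possible child $C\notin\mathbf{X}$ in its pc-component. I would construct two causal models inducing ADMGs in $\{\mathcal{P}\}$ that agree on $Q[\mathbf{T}]$ (indeed on $P(\mathbf{O})$) yet disagree on $Q[\mathbf{T}\setminus\mathbf{X}]$. The offending pc-component witnesses a collider-connected, non-visible path from $Z$ to its out-of-bucket child $C$, which can be realized as a confounded ``bow''-type structure linking the intervened bucket $\mathbf{X}$ to $C$; this is precisely the obstruction to identification in the ID algorithm (a hedge / C-forest). The construction mirrors the classical bow-arc counterexample, with the parameters chosen so that the summation over $\mathbf{X}$ cannot separate the confounding, making $Q[\mathbf{T}\setminus\mathbf{X}]$ differ between the two models while all observed and $Q[\mathbf{T}]$-level quantities coincide.

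The main obstacle I anticipate is the passage between the ``possible'' notions in the PAG and the definite structure in individual ADMGs: the criterion is phrased with pc-components and possible children, but the factorization identity is a statement about genuine c-components in a fixed graph. Establishing that the pc-component condition is at once the correct sufficient condition (no ADMG in the class splits the relevant c-component) and the correct necessary condition (some ADMG in the class realizes the splitting confounder) is the crux; it hinges on the visibility machinery and on the fact that a circle mark can be instantiated as either orientation across $\{\mathcal{P}\}$, so I would spend most of the effort verifying that $S^\mathbf{X}$ and the pc-component exactly bracket the range of definite c-components realizable in the equivalence class.
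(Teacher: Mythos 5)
First, a point of comparison: the paper does not prove this statement at all. It is a verbatim restatement of Proposition~2 of \citet{jaber2019idc}, included in the appendix only so that the CIDP algorithm can be reproduced; the paper explicitly defers to that reference for all details. So there is no in-paper proof to measure your attempt against, and your sketch has to be judged on its own merits as a reconstruction of the source's argument.

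On those merits, your outline has the right overall shape---reduce to Tian-style $Q$-factorization in each ADMG and argue uniformity over $\{\mathcal{P}\}$---but the two steps that carry all the weight are asserted rather than proved. For sufficiency, the claim that $\prod_{i\mid \mathbf{B}_i\subseteq S^\mathbf{X}} P_{\mathbf{V}\setminus\mathbf{T}}(\mathbf{B}_i\mid \mathbf{B}^{(i-1)})$ ``is precisely $Q[S^\mathbf{X}]$'' is itself the hard lemma: it requires showing (i) that the bucket partial order of $\mathcal{P}_\mathbf{T}$ refines a valid topological order of \emph{every} ADMG in the class, and (ii) that $S^\mathbf{X}$, defined via \emph{definite} c-components of $\mathcal{P}_\mathbf{T}$, is simultaneously a union of genuine c-components (or at least a set whose $Q$-factor is computable by the same product) in each such ADMG, even though the definite c-component in the PAG can be a strict subset of the actual c-component in a particular member of the class. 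You correctly flag this mismatch between ``definite/possible'' PAG notions and fixed-graph c-components as the crux, but flagging it is not closing it; without it the displayed formula is not justified. For necessity, you gesture at a bow/hedge-type two-model construction but never build it: one must exhibit, from the witness $Z \in \mathbf{X}$ with a possible child $C \notin \mathbf{X}$ in its pc-component, a concrete pair of models inducing graphs in $\{\mathcal{P}\}$ (possibly different ADMGs, since a circle mark can be instantiated either way) that agree on $Q[\mathbf{T}]$ yet disagree on $Q[\mathbf{T}\setminus\mathbf{X}]$, and verify that the invisibility of the edges on the witnessing collider path is exactly what permits the confounder to be inserted without leaving the equivalence class. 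As it stands, the proposal is a plausible roadmap for the proof in \citet{jaber2019idc}, not a proof.
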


\begin{definition}[Region $\mathcal{R}_\mathbf{A}^\mathbf{C}$]
Given a PAG $\mathcal{P}$ over $\mathbf{V}$, and $\mathbf{A}\subseteq\mathbf{C}\subseteq\mathbf{V}$. Let the region of $\mathbf{A}$ w.r.t. $\mathbf{C}$, denoted $\mathcal{R}_\mathbf{A}^\mathbf{C}$, be the union of the buckets that contain nodes in the pc-component of $\mathbf{A}$ in the induced subgraph $\mathcal{P}_\mathbf{C}$.
\end{definition}

We are now ready to state the algorithm.

\begin{algorithm}[!h]
\small
 \LinesNumbered
 \SetKwFunction{FIdentify}{Identify}
  \SetKwFunction{FDecompose}{Decompose}
   \SetKwFunction{FDoSee}{Do-See}
 \SetKw{throw}{throw}

 \SetKwProg{Fn}{Function}{:}{}
 \SetKwInOut{Input}{input}\SetKwInOut{Output}{output}
 \Input{three disjoint sets $\mathbf{X},\mathbf{Y},\mathbf{Z}\subset\mathbf{V}$}
 \Output{Expression for $P_\mathbf{X}(\mathbf{Y}|\mathbf{Z})$ or \texttt{FAIL}.}
 Let $\mathbf{D}=An^+(\mathbf{Y}\cup\mathbf{Z})_{\mathcal{P}_{\mathbf{V}\setminus\mathbf{X}}\setminus\mathbf{Z}}$\;
 $P_\mathbf{X}(\mathbf{Y}|\mathbf{Z}) = \sum_{\mathbf{D}\setminus\mathbf{Y}} Q[\mathbf{D}|\mathbf{Z}]$\;
 $\mathbf{F}$ = \FDecompose{$\mathcal{P},\mathbf{D},\mathbf{Z}$}\;
 Let $\mathbf{F}^*=\emptyset$\;
 \For{$\langle \mathbf{D}_i, \mathbf{Z}_i \rangle \in \mathbf{F}$}{
    \If{$\mathbf{D}_i \intersection \mathbf{Y} \not = \emptyset$}{
        $\mathbf{F}^* = \mathbf{F}^* \cup$ \FDoSee{$\mathcal{P}, \mathbf{D}_i, \mathbf{Z_i}$}\;
    }
 }
 $P_\mathbf{X}(\mathbf{Y}|\mathbf{Z})=\prod_{i|\langle \mathbf{D}_i, \mathbf{Z}_i \rangle \in \mathbf{F}^*} \sum_{\mathbf{D}_i\setminus\mathbf{Y}}\frac{\textrm{\FIdentify{$\mathbf{D}_i\cup\mathbf{Z}_i, \mathbf{V}, P$}}}{\sum_{\mathbf{D}_i}\textrm{\FIdentify{$\mathbf{D}_i\cup\mathbf{Z}_i, \mathbf{V}, P$}} }$\;

 \Fn{\FDecompose{$\mathcal{P},\mathbf{T},\mathbf{Z}$}}{
 \If{$\mathbf{T}==\emptyset$}{\KwRet $\emptyset$\;}
 \tcc{In $\mathcal{P}_{\mathbf{T}\cup\mathbf{Z}}$ let $C^{(\cdot)}$ denote the pc-component of $(\cdot)$ in $\mathcal{P}_{\mathbf{T}\cup\mathbf{Z}}$}
 Initialize $\mathbf{X}$ to some node in $\mathbf{T}$\;
 Let $\mathbf{A}=Pa^*(C^\mathbf{X})\intersection Pa^*(C^{\mathbf{T}\cup\mathbf{Z}\setminus C^\mathbf{X}})$\;
 
 \While{$\mathbf{A}\not\subseteq\mathbf{Z}$}{
    $\mathbf{X} = \mathbf{X} \cup Ch^*(\mathbf{A}\intersection\mathbf{T})$\;
    $\mathbf{A} = Pa^*(C^\mathbf{X})\intersection Pa^*(C^{\mathbf{T}\cup\mathbf{Z}\setminus C^\mathbf{X}})$\;
 }
 \tcc{Let $\mathbf{T}_1=C^\mathbf{X}\intersection\mathbf{T}$ and $\mathbf{T}_2=\mathbf{T}\setminus\mathbf{T}_1$}
 \KwRet $\langle \mathbf{T}_1, \mathcal{R}_{\mathbf{X}}\setminus\mathbf{T}_1 \rangle \cup $\FDecompose{$\mathcal{P}, \mathbf{T}_2, \mathcal{R}_{\mathbf{T}\cup\mathbf{Z}\setminus C^\mathbf{X}}\setminus\mathbf{T}_2$}\;
 }
 \Fn{\FDoSee{$\mathcal{P},\mathbf{T},\mathbf{Z}$}}{
    \tcc{Let $\mathbf{B}$ denote a bucket in $\mathcal{P}$ and $C^{(\cdot)}$ denote the pc-component of $(\cdot)$ in $\mathcal{P}_{\mathbf{T}\cup\mathbf{Z}\cup\mathbf{B}}$}
    
    \If{$\exists \mathbf{B} | \mathbf{B} \intersection (\mathbf{T} \cup \mathbf{Z}) \not = \emptyset \land \mathbf{B}\not\subseteq(\mathbf{T}\cup\mathbf{Z})$}{
        \If{$Pa^*(C^{\mathbf{B}\setminus(\mathbf{T}\cup\mathbf{Z})}\intersection \mathbf{T} = \emptyset$}{
            \KwRet \FDoSee{$\mathcal{P}, \mathbf{T}, \mathbf{Z}\cup\mathbf{B}\setminus\mathbf{T}$}\;
        }
        \Else{
        \throw \texttt{FAIL}\;}
    }
    \KwRet $\langle \mathbf{T}, \mathbf{Z} \rangle$\;
 }
 
  \Fn{\FIdentify{$\mathbf{C},\mathbf{T},Q=Q[\mathbf{T}]$}}{
    \If{$\mathbf{C}=\emptyset$}{\KwRet 1}
    \If{$\mathbf{C} = \mathbf{T}$}{\KwRet Q}
    \tcc{In $\mathcal{P}_\mathbf{T}$ let $\mathbf{B}$ denote a bucket, and let $C^\mathbf{B}$ denote the pc-component of $\mathbf{B}$}
    
    \If{$\exists\mathbf{B}\subset\mathbf{T}\setminus\mathbf{C}$ such that $C^\mathbf{B}\intersection Ch^+(\mathbf{B})\subseteq\mathbf{B}$}{
        Compute $Q[\mathbf{T}\setminus\mathbf{B}]$ from $Q$ using Proposition \ref{prop:bucket}\;
        \KwRet \FIdentify{$\mathbf{C},\mathbf{T}\setminus\mathbf{B}, Q[\mathbf{T}\setminus\mathbf{B}]$}\;
    }
    \ElseIf{$\exists \mathbf{B}\subset\mathbf{C}$ such that $\mathcal{R}_\mathbf{B} \not = \mathbf{C}$}{
    \KwRet $\frac{\textrm{\FIdentify{$\mathcal{R}_\mathbf{B},\mathbf{T}, Q$}}\cdot
    \textrm{\FIdentify{$\mathcal{R}_{\mathbf{C}\setminus\mathcal{R}_\mathbf{B}},\mathbf{T}, Q$}}
    }{\textrm{\FIdentify{$\mathcal{R}_\mathbf{B} \intersection\mathcal{R}_{\mathbf{C}\setminus\mathcal{R}_\mathbf{B}},\mathbf{T}, Q$}}}$
    }
    \Else{
    \throw \texttt{FAIL}\;
    }
  }
 
 \caption{CIDP($\mathbf{X},\mathbf{Y},\mathbf{Z}$) given PAG $\mathcal{P}$}
 \label{alg:cidp}
\end{algorithm}

\newpage

\section{Proofs of Main Results}\label{app:proofs}

\begin{proof}[Proof of Proposition \ref{prop:admiss}]
Follows from $S$-admissibility \citep[Theorem 2]{pearl2011transportability} and the definition of a PAG (independences that hold in every member of the PAG's equivalence class must also hold in the PAG).
\end{proof}

\begin{proof}[Proof of Proposition \ref{prop:do-stability}]
In each ADMG $\mathcal{G}$ in the equivalence class $\{\mathcal{P}\}$, we have that $Y \ci \mathbf{S} | \mathbf{Z}$ in $\mathcal{G}_{\overline{\mathbf{M}}}$, the mutilated graph in which all edges into $\mathbf{M}$ have been deleted due to the $do$ operator in $P(Y|do(\mathbf{M}), \mathbf{Z})$. Now, by Rule 2 of $do$-calculus we have that $P(Y|do(\mathbf{M}), \mathbf{Z},\mathbf{S})$ = $P(Y|do(\mathbf{M}),\mathbf{Z})$ (again in each ADMG in $\{\mathcal{P}\}$), so $P(Y|do(\mathbf{M}),\mathbf{Z})$ is a stable distribution by Proposition \ref{prop:admiss}.
\end{proof}

\begin{proof}[Proof of Corollary \ref{cor:sound}]
First, note that given an invariance spec $\langle \mathcal{P}, \mathbf{M} \rangle$, Algorithm \ref{alg} searches over distributions of the form $P(Y|do(\mathbf{M}),\mathbf{Z})$. All of these are stable by Proposition \ref{prop:do-stability}. Now, for conditioning sets that satisfy Theorem \ref{thm:invar}, these are in fact stable due to the fact that the theorem is a sufficient graphical conditional for invariance (see the ``if'' direction of the proof in \citet{zhang2008causal}). For conditional interventional distributions found to be identifiable by CIDP, correctness follows from its soundness \citet[Theorem 1]{jaber2019idc}.
\end{proof}

\begin{proof}[Proof of Lemma \ref{lemma:cond}]
The conditioning set $\mathbf{Z}$ found by the dataset driven method will be checked in Line 5 of Algorithm \ref{alg} to see if it satisfies Theorem \ref{thm:invar}. Because Theorem \ref{thm:invar} is sound and complete in PAGs, it is satisfied by all stable conditioning sets. Thus, Algorithm \ref{alg} will find $P(Y|\mathbf{Z})$ to be stable and will append it to $Stable$.
\end{proof}

\begin{proof}[Proof of Lemma \ref{lemma:do-see}]
Consider Fig \ref{fig:subgraph}b in which $P_{X_1}(Y|X_3, X_2) = \frac{P(Y|X_3)P(X_2|Y,X_1)}{\sum_{Y'}P(Y'|X_3)P(X_2|Y',X_1)}$ is stable but is not reducible to a conditional distribution of the form $P(Y|\mathbf{Z}),\mathbf{Z}\subseteq\mathbf{O}$.
\end{proof}

\section{Clarifying Relation to Dataset-Driven Approaches}\label{app:relate}

We now discuss how existing dataset-driven methods, \citet{rojas2018invariant} and \citet{magliacane2018domain} in particular, can be adapted to address a problem defined by an invariance spec. Then, by virtue of the fact that these methods search for invariant conditionals, this means that these methods are subsumed by \textsc{I-SPEC}.

First, \citet{rojas2018invariant} is related to work on invariant prediction that finds stable distributions by hypothesis testing the stability of a distribution across source environments \citep{peters2016causal}. While these works do not assume faithfulness, under the faithfulness assumption (which is made by \textsc{I-Spec}), it has been shown that an invariant distribution $P(Y|\mathbf{X})$ corresponds to a feature set $\mathbf{X}$ such that $Y\ci E | \mathbf{X}$: the target variable is $d$-separated from the environment indicator given the features \citep[Appendix C]{peters2016causal}. Thus, \citet{rojas2018invariant} can naturally be applied to the input of \textsc{I-Spec} and it searches for stable conditional distributions as defined within the main paper.

\citet{magliacane2018domain}, by contrast, builds on the Joint Causal Inference (JCI) framework proposed in \citet{mooij2016joint}. The JCI framework considers a related setting to the environment indicator setup (that is used by \textsc{I-Spec} and invariant prediction works like \citet{peters2016causal,rojas2018invariant}) in which there are instead (possibly multiple) \emph{context} variables that describe how environments differ as opposed to \emph{system} variables which are the observed variables that form the feature set and target variable. The environment indicator $E$ described in the main paper can reasonably be viewed as a single context variable. Thus, invariance specs can be translated into the JCI framework. The specific method proposed in \citet{magliacane2018domain} considers a problem setup in which unlabeled target domain data is available. However, within \textsc{I-Spec} the assumption is that the unknown target environment will be drawn from the set of environments defined by an invariance spec $\langle \mathcal{P}, \mathbf{M} \rangle$. This stronger assumption is what allows \textsc{I-Spec} to be applied in settings in which no target environment data is available. Under this assumption it is straightforward to adapt the method of \citet{magliacane2018domain} to handle the input of \textsc{I-Spec}. However, the method of \citet{magliacane2018domain} searches only over stable conditional distributions.

Thus, both of these relevant dataset-driven methods are applicable to the same problems as \textsc{I-Spec}. However, they search over stable conditional distributions, which (under the assumptions of the \textsc{I-Spec} framework) consist of all the distributions (and only the distributions) that satisfy \citet[Theorem 30]{zhang2008causal} (which is sound and complete in PAGs). Then, by Lemma \ref{lemma:do-see} we get Corollary \ref{cor:subsume}, and we have that \textsc{I-Spec} subsumes existing dataset-driven methods in their ability to find stable distributions due to the additional search over stable interventional distributions.

\section{Learned PAG}\label{app:pag}
\begin{sidewaysfigure}[!h]
\centering
\includegraphics{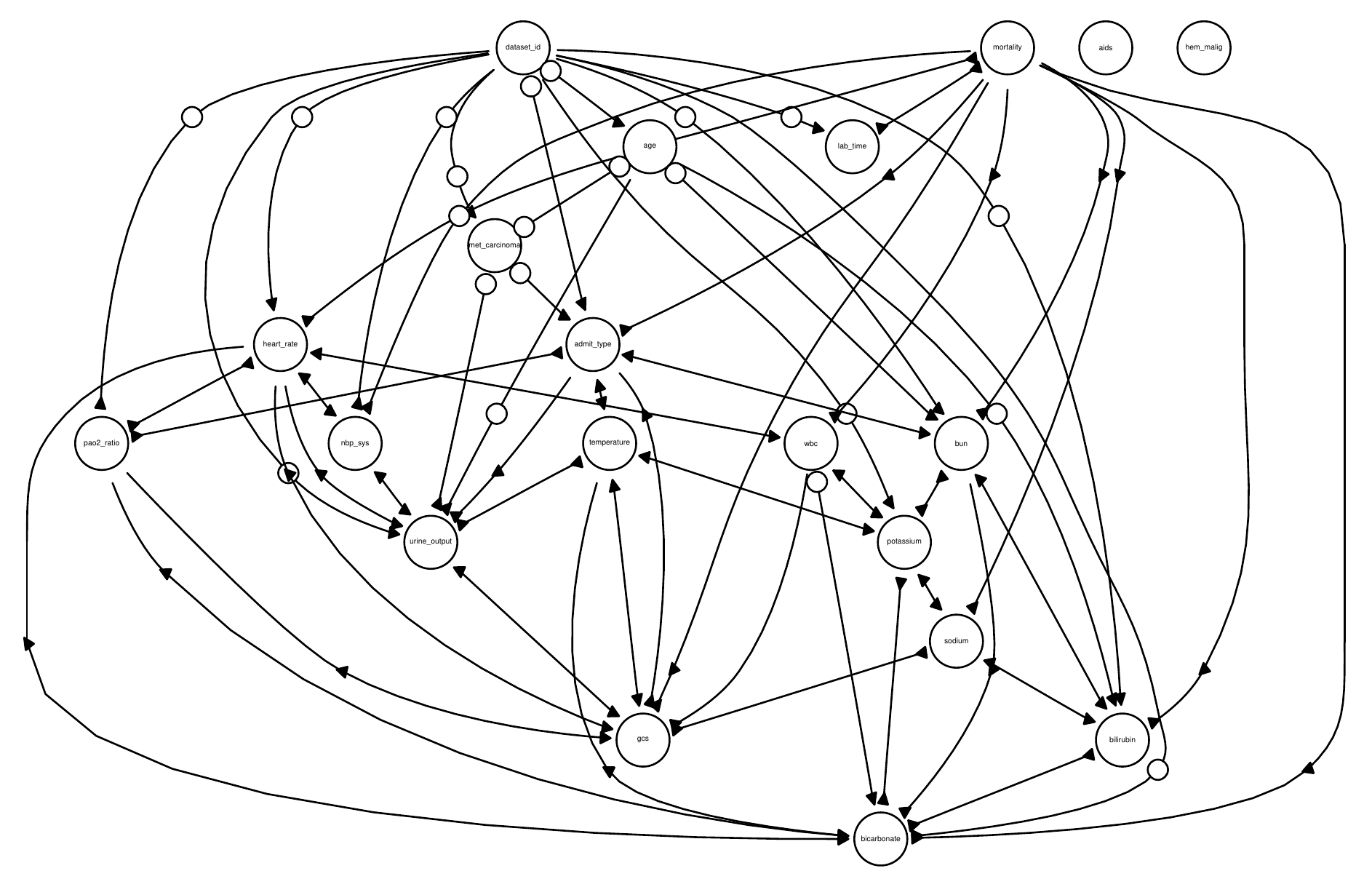}
\caption{The PAG learned using pooled FCI on the full, 3 hospital dataset. Because the dataset is mixed continuous and discrete, we use the Degenerate Gaussian Likelihood Ratio Test and set $\alpha=0.01$. Recall that bidirected edges $\leftrightarrow$ denote unobserved confounding, and that $\circ$ edge marks denote that there is at least one MAG in the equivalence class in which this mark is a head and at least one in which this is a tail.}
\label{fig:learned-pag}
\end{sidewaysfigure}

\clearpage

\section{Experimental Details}\label{app:exp}
\subsection{Simulated Experiment Details}\label{app:sim-exp}
We generated data according to the following linear Gaussian system:
\begin{align*}
    &X_3 \sim \mathcal{N}(0, 0.1^2) \\ 
    &U \sim \mathcal{N}(0, 0.1^2)\\
    &Y|U,X_3 \sim \mathcal{N}(0.5 X_3 + 5U, 0.1^2)\\
    &X_1|U \sim \mathcal{N}(\alpha U, 0.1^2)\\
    &X_2|Y,X_1 \sim \mathcal{N}(0.2 Y - X_1, 0.1^2) \\
\end{align*}

Different environments correspond to different values of the coefficient $\alpha$ in the structural equation for $X_1$. We generated 50,000 samples each from two source environments associated with $\alpha=4$ and $\alpha=8$. We pooled the data from these two environments to train all three models. We evaluated the three models in 100 test environments created by varying $\alpha$ on an evenly spaced grid from $-5$ to $17$, sampling 10,000 data points from each test environment.

We briefly note that $E[Y|do(X_1), X_2, X_3] = E[Y|X_2^*, X_3]$, where $X_2^* = X_2 - (-X_1)$ (e.g., $X_2$ with effect of $X_1$ removed). See \citet{subbaswamy2019hierarchy} for the equivalence of using the \emph{auxiliary variable} $X_2^*$ (a counterfactual variable) to the original interventional distribution. To compute $X_2^*$ we first fit a linear regression for the structural equation of $X_2$ to learn the coefficient of $X_1$ (which is -1). Then, using the estimated coefficient, we computed an estimate of $X_2^*$ before fitting the model $E[Y|X_2^*, X_3]$. Test environment $X_2^*$ values were computed using the coefficient learned from training data.

\subsection{Real Data Experiment Details:}
\paragraph{Data Cohort}
We construct the pooled dataset using de-identified measurements from patients who are admitted or transferred to the intensive care unit (ICU) of three hospitals in our institution’s network within from early 2016 to early 2018. We only consider patients who stayed in the ICU for longer than 24 hours and use data collected during the first 24 hours of their visit. We focus on the non-pediatric case, requiring all patients to be over 15 years old. For patients with multiple ICU encounters, we only consider data from their first encounter. These criteria result in a cohort of 24,787 patients. Mortality rates varied as follows: 7\% in H1, 10\% in H2, and 12\% in H3.

\paragraph{Data Features}
The target variable of our prediction model is Mortality, which is defined as an in-hospital death. We capture 12 physiologic features: Heart Rate, Systolic Blood Pressure, Temperature, Glasgow Coma Scale/Score (GCS), PaO$_{2}$/FiO$_{2}$, Blood Urea Nitrogen, Urine Output, Sodium, Potassium, Bicarbonate, Bilirubin, and White Blood Cell Count. We computed the worst value using the SAPS II criteria found in \citet{le1993new}. Furthermore, we consider age and three comorbidities: Metastatic cancer, Hematologic malignancy, and AIDS. SAPS II also makes use of the admission type (i.e., scheduled surgical, unscheduled surgical, or medical). To create a known shift, we simulate another healthcare process variable: time of day when lab measurements occur (i.e., morning $1$ or night $0$), such that mortality is correlated with morning measurements in Hospital 1, uncorrelated with measurement timing in Hospital 2, and correlated with night measurements in Hospital 3.

Specifically, we generated Lab Time $L$ as follows:
\begin{enumerate}
    \item $P(L=1|M=1,Hospital=1)=0.65$, $P(L=1|M=0,Hospital=1)=0.4$
    \item $P(L=1|M=1,Hospital=2)=0.5$, $P(L=1|M=0,Hospital=2)=0.5$
    \item $P(L=1|M=1,Hospital=3)=0.4$, $P(L=1|M=0,Hospital=3)=0.65$
\end{enumerate}

\paragraph{Imputation of missing values}
To account for the missing physiologic feature values, we impute our data via ``Last Observation Carried Forward'' (LOCF). If the feature value is missing from the patient’s first 24 hours, we impute it with the most recently recorded value prior to their ICU stay. Otherwise, we fill the missing value with the hospital-specific population mean.

\paragraph{Training}
We trained unregularized Logistic Regression models using ``classif.logreg'' in the 
\textbf{\textsf{R}} language's \texttt{mlr} package \citep{mlr}.

\end{document}